\documentclass{article}

\PassOptionsToPackage{numbers,compress}{natbib}
\usepackage[preprint]{neurips_2026}
\usepackage{fontawesome5}
\usepackage[utf8]{inputenc}
\usepackage[T1]{fontenc}
\usepackage{microtype}
\usepackage{hyperref}
\usepackage{url}
\usepackage{booktabs}
\usepackage{amsmath}
\usepackage{amssymb}
\usepackage{amsfonts}
\usepackage{nicefrac}
\usepackage{graphicx}
\usepackage{xcolor}
\usepackage{multirow}
\usepackage{subcaption}
\usepackage{enumitem}
\usepackage{algorithm}
\usepackage{algorithmic}
\usepackage{wrapfig}
\usepackage[breakable]{tcolorbox}
\usepackage{tcolorbox}
\definecolor{darkblue}{rgb}{0, 0, 0.5}
\hypersetup{colorlinks=true, citecolor=darkblue, linkcolor=darkblue, urlcolor=darkblue}

\newcommand{\ours}{G-Zero}

\newcommand{\ahard}{a_{\text{hard}}}
\newcommand{\aassisted}{a_{\text{assisted}}}

\newcommand{\eg}{\textit{e.g.}}

\usepackage{xspace}
\usepackage{amsmath}
\usepackage{graphicx}
\usepackage{framed}
\usepackage{bbm}
\usepackage{algorithm}
\usepackage{amsthm}
\usepackage[algo2e, vlined, ruled, linesnumbered]{algorithm2e}
\usepackage{natbib}
\usepackage{mathrsfs}
\usepackage{amssymb}
\usepackage{bm}
\usepackage{tabulary}
\usepackage{xcolor}
\usepackage{prettyref}
\usepackage{mathtools}
\usepackage{amsmath}

\usepackage{multirow}
\usepackage{nicefrac}
\usepackage{amssymb}
\usepackage{pifont}
\definecolor{Green}{rgb}{0.13, 0.65, 0.3}
\usepackage{colortbl}

\DeclareMathOperator*{\argmax}{argmax} 

\newtheorem{theorem}{Theorem}
\newtheorem{assumption}{Assumption}

\newtheorem{lemma}[theorem]{Lemma}

\usepackage{tikz}

\newcommand{\nonl}{\renewcommand{\nl}{\let\nl}}

\usepackage{prettyref}

\newcommand{\savehyperref}[2]{\texorpdfstring{\hyperref[#1]{#2}}{#2}}
\newrefformat{eq}{\savehyperref{#1}{Eq.~\textup{(\ref*{#1})}}}
\newrefformat{eqn}{\savehyperref{#1}{Equation~\ref*{#1}}}
\newrefformat{lem}{\savehyperref{#1}{Lemma~\ref*{#1}}}
\newrefformat{lemma}{\savehyperref{#1}{Lemma~\ref*{#1}}}
\newrefformat{def}{\savehyperref{#1}{Definition~\ref*{#1}}}
\newrefformat{line}{\savehyperref{#1}{Line~\ref*{#1}}}
\newrefformat{thm}{\savehyperref{#1}{Theorem~\ref*{#1}}}
\newrefformat{corr}{\savehyperref{#1}{Corollary~\ref*{#1}}}
\newrefformat{cor}{\savehyperref{#1}{Corollary~\ref*{#1}}}
\newrefformat{sec}{\savehyperref{#1}{Section~\ref*{#1}}}
\newrefformat{subsec}{\savehyperref{#1}{Section~\ref*{#1}}}
\newrefformat{app}{\savehyperref{#1}{Appendix~\ref*{#1}}}
\newrefformat{assum}{\savehyperref{#1}{Assumption~\ref*{#1}}}
\newrefformat{ex}{\savehyperref{#1}{Example~\ref*{#1}}}
\newrefformat{fig}{\savehyperref{#1}{Figure~\ref*{#1}}}
\newrefformat{alg}{\savehyperref{#1}{Algorithm~\ref*{#1}}}
\newrefformat{rem}{\savehyperref{#1}{Remark~\ref*{#1}}}
\newrefformat{conj}{\savehyperref{#1}{Conjecture~\ref*{#1}}}
\newrefformat{prop}{\savehyperref{#1}{Proposition~\ref*{#1}}}
\newrefformat{proto}{\savehyperref{#1}{Protocol~\ref*{#1}}}
\newrefformat{prob}{\savehyperref{#1}{Problem~\ref*{#1}}}
\newrefformat{claim}{\savehyperref{#1}{Claim~\ref*{#1}}}
\newrefformat{que}{\savehyperref{#1}{Question~\ref*{#1}}}
\newrefformat{op}{\savehyperref{#1}{Open Problem~\ref*{#1}}}
\newrefformat{fn}{\savehyperref{#1}{Footnote~\ref*{#1}}}
\newrefformat{tab}{\savehyperref{#1}{Table~\ref*{#1}}}
\newrefformat{fig}{\savehyperref{#1}{Figure~\ref*{#1}}}
\newrefformat{proc}{\savehyperref{#1}{Procedure~\ref*{#1}}}

\usepackage{caption}

\usepackage{accents}

\title{G-Zero: Self-Play for Open-Ended \\Generation from Zero Data}

\author{
  Chengsong Huang$^{1}$,~
  Haolin Liu$^{2}$,~
  Tong Zheng$^{3}$,~
  Runpeng Dai$^{4}$,~
  Langlin Huang$^{1}$, \\
  {\bfseries Jinyuan Li$^{1}$,~
  Zongxia Li$^{3}$,~
  Zhepei Wei$^{2}$,~
  Yu Meng$^{2}$,~
  Jiaxin Huang$^{1}$} \\
  \vspace{2pt}\\
  $^{1}$Washington University in St.\ Louis \quad
  $^{2}$University of Virginia \\
  $^{3}$University of Maryland \quad
  $^{4}$University of North Carolina at Chapel Hill \\
  \vspace{2pt}\\
  \texttt{\{chengsong,jiaxinh\}@wustl.edu}\vspace{-1.5em}
}

\begin{document}

\maketitle

\begin{abstract}
Self-evolving LLMs excel in verifiable domains but struggle in open-ended tasks, where reliance on proxy LLM judges introduces capability bottlenecks and reward hacking. To overcome this, we introduce \ours{}, a verifier-free, co-evolutionary framework for autonomous self-improvement. Our core innovation is \textbf{Hint-$\delta$}, an intrinsic reward that quantifies the predictive shift between a Generator model's unassisted response and its response conditioned on a self-generated hint. Using this signal, a Proposer model is trained via GRPO to continuously target the Generator's blind spots by synthesizing challenging queries and informative hints. The Generator is concurrently optimized via DPO to internalize these hint-guided improvements. Theoretically, we prove a best-iterate suboptimality guarantee for an idealized standard-DPO version of \ours{}, provided that the Proposer induces sufficient exploration coverage and the data filteration keeps pseudo-label score noise low. By deriving supervision entirely from internal distributional dynamics, \ours{} bypasses the capability ceilings of external judges, providing a scalable, robust pathway for continuous LLM self-evolution across unverifiable domains.
\end{abstract}

\vspace{-1.5em}
\begin{center}
\faGithub\ Code: \href{https://github.com/Chengsong-Huang/G-Zero}{\texttt{https://github.com/Chengsong-Huang/G-Zero}}
\end{center}
\vspace{-1.5em}
\section{Introduction}
\label{sec:intro}

Self-evolving Large Language Models (LLMs) have emerged as a promising path beyond the limits of human-curated supervision. Rather than relying on static datasets, these models autonomously generate, refine, and learn from their own outputs, offering a scalable route to capabilities that exceed what human imitation alone can provide~\citep{Tao2024ASO,Tan2024LargeLM,xiang2025systematic}. This potential has been most clearly demonstrated in reasoning-intensive tasks with strictly verifiable outcomes. In these settings, prior work~\citep{zhao2025absolute,huang2025r,liu2025spice} shows that models can discover complex problem-solving strategies through self-play, continuously improving toward expert-level performance.

However, this paradigm relies crucially on the existence of programmatic oracles. In domains like mathematics or code generation, deterministic signals, such as numerical correctness or functional execution, provide the ground truth required for Reinforcement Learning from Verifiable Rewards (RLVR)~\citep{Shao2024DeepSeekMathPT, DeepSeekAI2025DeepSeekR1IR}. Conversely, a broad class of real-world scenarios, including open-ended instruction following~\citep{shi2025hi}, multi-turn dialogue~\citep{yi2025survey}, and creative writing, lack such objective oracles. To navigate these settings, existing methods frequently rely on LLM-as-a-judge~\citep{gu2024survey} mechanisms for surrogate reward signals. This workflow introduces two critical limitations. First, the evolving model's performance ceiling is fundamentally bottlenecked by the judge's capabilities. Second, the optimization process is highly vulnerable to reward hacking~\citep{wang2026reward}; rather than genuinely improving response quality, the model learns to exploit the judge's stylistic vulnerabilities, such as bias, formatting preferences, or verbosity. This raises a crucial question: \emph{How can self-evolution scale in unverifiable domains without internalizing these pathologies?}
\begin{figure}
    \centering
    \includegraphics[width=0.95\linewidth]{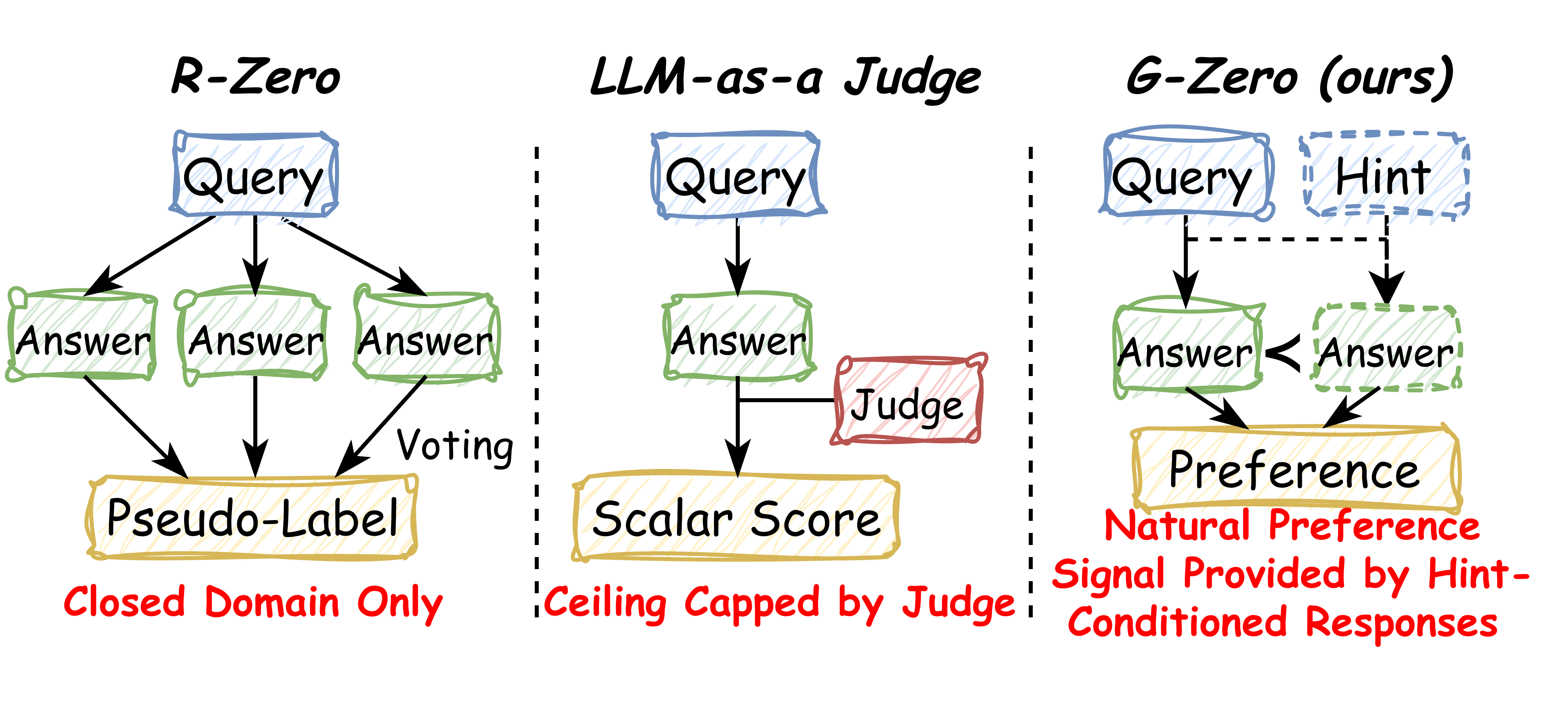}
    \caption{\textbf{Comparison of self-supervision signals.} R-Zero~\cite{huang2025r} uses majority voting, restricting it to verifiable closed-domain tasks. LLM-as-a-Judge assigns scalar scores, bounded by the judge's capability. In contrast, \ours{} creates an internal preference signal by preferring hint-conditioned responses over unassisted ones, eliminating the need for external verifiers or judges.}
    \label{fig:teaser}
\end{figure}

To move self-evolution beyond verifiable tasks and avoid the flaws of proxy LLM judges, we introduce \ours{}, a verifier-free, co-evolutionary framework that derives supervision entirely from internal dynamics. \ours{} operates through the interaction of two separate models: a Proposer and a Generator. 
The core innovation of \ours{} is our designed intrinsic signal called \textbf{Hint-$\delta$}, which measures how much a hint shifts the Generator's predictive distribution over its own unassisted response without the hint.
Hint-$\delta$ measures a cognitive gap by coupling two objectives into one scalar: it can be large only when the underlying query is challenging for the Generator \emph{and} the hint carries necessary information or reasoning that the Generator does not already possess. 
Using this intrinsic signal, the Proposer is trained via GRPO to synthesize challenging queries paired with informative hints, 
while the Generator is concurrently optimized via DPO to internalize these hint-guided improvements.
Specifically, the Generator learns to favor the hint-guided response (the \textit{chosen} output) over its initial, unassisted answer (the \textit{rejected} output). 
The two models co-evolve through iterative rounds.
This design directly addresses the two problems of judge-based self-evolution. Because Hint-$\delta$ is computed entirely from the Generator's own log-probabilities under matched contexts, the difficulty ceiling 
automatically improves with the Generator's capabilities.

We theoretically and empirically validate the \ours{} framework. 
Theoretically, we formalize the co-evolutionary loop and prove a best-iterate suboptimality guarantee for an idealized standard-DPO variant, under sufficient Proposer-induced coverage, and low \(\delta\)-certified pseudo-label score noise. Empirically, \ours{} demonstrates robust improvements within several self-play iterations, and achieves substantial gains on both open-ended (\eg, $+3.74$ points on AlpacaEval) and verifiable (\eg, $+5.21$ points in AIME 25) tasks across diverse model families (Qwen and Llama). Further analysis shows that the model's substantial reasoning improvements do not stem from domain-specific memorization, but from internalizing logical depth in open-ended, non-verifiable tasks, which surprisingly transfers to rigorous domains like mathematical problem-solving.

In summary, our main contributions are:
\begin{itemize}[leftmargin=*]
\setlength\itemsep{0em}
    \item \textbf{A Verifier-Free Co-Evolutionary Framework}: We propose \ours{}, a self-play pipeline that drives continuous self-evolution through hint-induced response shifts in open-ended domains without external verifiers.
    \item  \textbf{Theoretical Characterization of Intrinsic Self-Play}: We formalize the co-evolutionary loop and prove a best-iterate suboptimality guarantee for an idealized standard-DPO variant of \ours{}, with the bound controlled by Proposer-induced coverage, and \(\delta\)-certified pseudo-label score noise.
    \item \textbf{Empirical Improvements on Both Open-Ended and Verifiable Domains}: We demonstrate that \ours{} brings substantial improvements on instruction-following, chatting, and reasoning tasks across different model families, and also successfully internalize logical depth from open-ended, non-verifiable tasks to rigorous domains like mathematical problem-solving.
\end{itemize}

\section{Preliminaries: Optimization Objectives}
\label{sec:background}
\paragraph{Direct Preference Optimization (DPO).}
Direct Preference Optimization~\citep{rafailov2023direct} aligns a language model policy $\pi_\theta$ with preferences without requiring a separate reward model. Given a dataset $\mathcal{D}$ of preference triples $(x, y_w, y_l)$, where $x$ represents the input prompt, $y_w$ is the preferred (chosen) response, and $y_l$ is the rejected response, DPO optimizes the policy against a frozen reference model $\pi_{\text{ref}}$ by minimizing the following loss:
\begin{equation}
    \mathcal{L}_{\text{DPO}}(\theta) = -\,\mathbb{E}_{(x, y_w, y_l)\sim\mathcal{D}}\!\left[\log \sigma\!\left(\beta \log \frac{\pi_\theta(y_w \mid x)}{\pi_{\text{ref}}(y_w \mid x)} - \beta \log \frac{\pi_\theta(y_l \mid x)}{\pi_{\text{ref}}(y_l \mid x)}\right)\right].
    \label{eq:dpo}
\end{equation}
where $\sigma$ is the logistic function and $\beta$ is a hyperparameter that controls the deviation from the reference policy.

\paragraph{Group Relative Policy Optimization (GRPO).}
Group Relative Policy Optimization~\cite{Shao2024DeepSeekMathPT} is an efficient reinforcement learning algorithm that omits the need for an external value model. For a given context $c$ sampled from a dataset $\mathcal{P}$, the policy $\pi_\theta$ samples a group of $K$ outputs $\{o_1, \dots, o_K\}$ (we use $K$ rather than $G$ to avoid clashing with the Generator subscript $\pi_G$). The policy is updated by maximizing the following clipped objective, where $\epsilon \in (0, 1)$ is the PPO clip range:
\begin{equation}
\begin{aligned}
    \mathcal{J}_{\text{GRPO}}(\theta) = \mathbb{E}_{c \sim \mathcal{P}, \{o_i\}_{i=1}^K \sim \pi_{\text{old}}} \Bigg[ \frac{1}{K} \sum_{i=1}^K \min \bigg( & \frac{\pi_\theta(o_i \mid c)}{\pi_{\text{old}}(o_i \mid c)} A_i, \\
    & \text{clip}\Big(\frac{\pi_\theta(o_i \mid c)}{\pi_{\text{old}}(o_i \mid c)}, 1-\epsilon, 1+\epsilon\Big) A_i \bigg) \Bigg].
\end{aligned}
    \label{eq:grpo}
\end{equation}
Following prior work~\citep{liu2025understanding}, we omit the KL divergence penalty in our formulation. The advantage $A_i$ is computed by standardizing the scalar rewards $r(o_i)$ within the sampled group: $A_i = (r(o_i) - \mu_r) / \sigma_r$, where $\mu_r = \frac{1}{K} \sum_{j=1}^K r(o_j)$ and $\sigma_r = \sqrt{\frac{1}{K} \sum_{j=1}^K (r(o_j) - \mu_r)^2}$.
\section{The G-Zero Framework}
\label{sec:method}
\begin{figure}[t]
    \centering
    \includegraphics[width=0.95\linewidth]{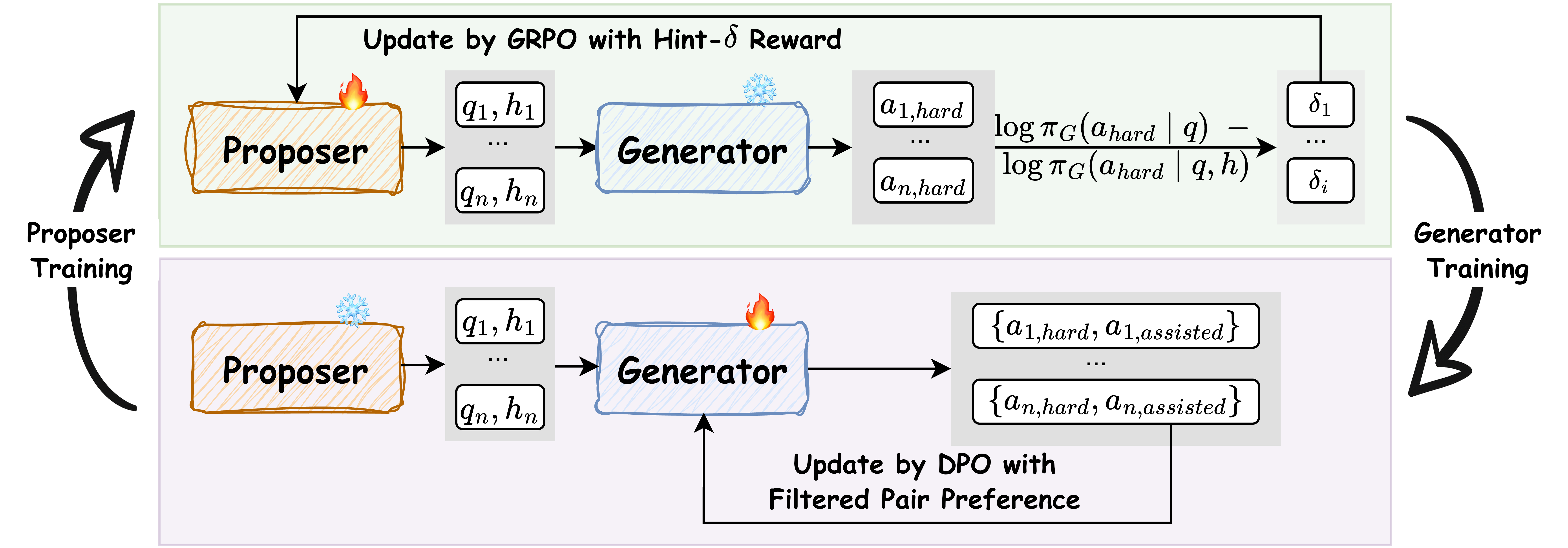}
    \caption{\textbf{The \ours{} co-evolutionary loop.}
    \textit{Top (Proposer training):} The Proposer $\pi_P$ generates query--hint pairs $\{(q_i, h_i)\}$. The frozen Generator $\pi_G$ produces unassisted responses, and Hint-$\delta$ is computed from the log-probability shift each hint induces on the Generator's distribution. The $\delta$ values serve as the GRPO reward, driving $\pi_P$ to explore the Generator's blind spots.
    \textit{Bottom (Generator training):} With $\pi_P$ frozen, the Generator answers each query with and without the hint; We then filter the resulting pairs into a preference dataset $\mathcal{D}_{R+1}$, where $R$ is the round index, on which $\pi_G$ is updated via DPO.}
    \label{fig:method}
\end{figure}
\ours{} is an iterative, co-evolutionary self-play framework designed for continuous LLM self-improvement. Instead of relying on external verifiers or inherently verifiable tasks, we construct preference pairs directly by contrasting the model's unassisted responses against those conditioned on intrinsic hints.

As illustrated in Figure~\ref{fig:method}, a single training round consists of two interacting phases: 
(1) \textbf{Proposer Training} (\S\ref{sec:phase1}): The Proposer is trained using Hint-$\delta$ (defined in \S\ref{sec:delta}) to identify challenging queries and pair them with informative hints. 
(2) \textbf{Dataset Curation and Generator Training} (\S\ref{sec:phase2}): Hint-$\delta$ is repurposed as a quality filter to curate well-suited response pairs. The Generator is then updated via DPO to favor the hint-guided responses over their unassisted baselines.

Through this iterative process, the Generator absorbs the structural and stylistic patterns elicited by the hints, and therefore learns to produce higher-quality independent responses. 
The improved model then serves as the base for the next round, enabling continuous self-evolution.

\subsection{The Intrinsic Learning Signal: Hint-$\delta$}
\label{sec:delta}

\ours{} is fundamentally driven by a single intrinsic learning signal, \textbf{Hint-$\delta$}.
Let $\pi_G$ denote the Generator LLM under training and $\pi_P$ the Proposer model used to explore the open-ended task space. 
For a given query $q$ and a proposed hint $h$, let $\ahard \sim \pi_G(\cdot \mid q)$ be the baseline response generated by the Generator without the hint, with token sequence $\ahard = (a_1, \dots, a_T)$. The Hint-$\delta$ signal measures how much the hint shifts the Generator's predictive distribution over its own unassisted response, evaluated as a \emph{per-token mean} log-likelihood difference:
\begin{equation}
    \delta(q, h, \ahard) \;=\; \frac{1}{T}\sum_{t=1}^{T}\Big[\log \pi_G(a_t \mid q, a_{<t}) - \log \pi_G(a_t \mid q, h, a_{<t})\Big].
    \label{eq:hintdelta}
\end{equation}
We deliberately use the per-token mean rather than the sequence-level sum so that $\delta$ is invariant to the length of $\ahard$: the Proposer cannot trivially inflate its reward by eliciting longer unassisted responses. Empirically, on our $1{,}824$-sample R1 raw pool we measure a Spearman rank correlation of $-0.41$ between $\delta$ and the character length of $\ahard$, i.e., longer $\ahard$ tend to receive \emph{smaller} $\delta$, which is consistent with the per-token normalization removing the naive length bias.

A key advantage of this formulation is that 
\textbf{Hint-$\delta$} effectively captures both query difficulty and hint informativeness at once, yielding a large $\delta$ only when two conditions are jointly met: (i) the underlying query is genuinely challenging for the Generator, so that the unassisted response is flawed or uncertain, and (ii) the hint carries missing knowledge or reasoning steps needed by the Generator to largely reshape the response distribution. 
Either factor alone is insufficient: If the query is trivial, the hints tend to be redundant with $\pi_G$'s prior knowledge, leaving the log-probability unchanged ($\delta \approx 0$). Symmetrically, for a difficult query, if the hint is uninformative, it fails to perturb the distribution. 
Consequently, maximizing $\delta$ drives the Proposer to jointly search over query difficulty and hint informativeness, automatically targeting the Generator's blind spots without any external difficulty signal. Crucially, the Proposer's reward is computed against the \emph{current} Generator, so as $\pi_G$ improves, the threshold for what counts as an ``informative hint'' rises with it. The two models therefore co-evolve across rounds.

\subsection{Proposer Training}
\label{sec:phase1}

The objective of this phase is to train the Proposer $\pi_P$ to actively propose challenging queries paired with informative hints that elicit a significant, constructive response shift in the Generator.

We design a specific system prompt that instructs $\pi_P$ to jointly generate a query $q$ and a corresponding hint $h$, enforcing a strict structural format utilizing \texttt{<question>} and \texttt{<hint>} XML tags. The Proposer is optimized via GRPO using Eq.~\eqref{eq:grpo}, where the output $o_i$ corresponds to the generated pair $(q_i, h_i)$. We use the Hint-$\delta$ signal in Eq.~\eqref{eq:hintdelta} as our intrinsic reward. 

However, optimizing purely for $\delta$ introduces vulnerabilities. A naive Proposer might learn to generate excessively verbose text to artificially shift the Generator's distribution. To prevent this reward hacking, we introduce a \textbf{Length Penalty}
\begin{equation}
    \mathcal{P}_{\text{length}} \;=\; \lambda \cdot \max\!\Big(0,\; \tfrac{|h|-200}{100}\Big),
\end{equation}
where $|h|$ is the hint length in characters and $\lambda = 0.03$ is used in all reported runs, penalizing hints that exceed a reasonable budget of $\sim$200 characters. Furthermore, to prevent the Proposer from collapsing into generating repetitive pairs, we apply a \textbf{BLEU Duplication Penalty} ($\mathcal{P}_{\text{BLEU}}$). We agglomeratively cluster all generated questions in the step's batch using sentence-BLEU distance with average-linkage and a merge threshold of $0.5$ (i.e., questions whose pairwise BLEU exceeds $0.5$ are merged into a cluster). For each rollout we set $\mathcal{P}_{\text{BLEU}} = |C_i| / |B|$, the fraction of the step's batch $B$ that lies in the rollout's own cluster $C_i$: a unique question receives a small $\sim\!1/|B|$ penalty, while a question shared by many rollouts is heavily discounted. 

The total reward combines the intrinsic $\delta$ signal with the penalties:
\begin{equation}
    r(q, h) \;=\; \delta(q, h, \ahard) \;-\; \mathcal{P}_{\text{length}} \;-\; \mathcal{P}_{\text{BLEU}}
    \label{eq:proposer_reward}
\end{equation}
For formatted error pairs (e.g., missing mandatory XML blocks or empty fields), we apply a hard-coded penalty floor of $-1$ and skip the $\delta$ computation entirely to save computation, while still applying the duplication penalty to punish repeated formatting failures.

\subsection{Generator Training and Dataset Curation}
\label{sec:phase2}
\label{sec:phase3}

In this final phase, we train the Generator $\pi_G$ on a curated preference dataset $\mathcal{D}_{R+1}$ using the DPO loss (Eq.~\eqref{eq:dpo}), with the hint-assisted response $\aassisted$ as the chosen sample ($y_w$) and the unassisted response $\ahard$ as the rejected sample ($y_l$). The reference model $\pi_{\text{ref}}$ is initialized as a frozen snapshot of $\pi_G$ taken at the start of the round, anchoring DPO updates to a stable behavioral baseline. To neutralize the well-known length bias of vanilla DPO, in which longer chosen responses contribute disproportionately to the gradient regardless of content, we adopt a length-normalized variant that replaces the sequence-summed log-ratio in Eq.~\eqref{eq:dpo} with its per-token mean:
\begin{equation}
    \mathcal{L}_{\text{DPO}}^{\text{LN}}(\theta) = -\,\mathbb{E}_{(x, y_w, y_l)\sim\mathcal{D}_{R+1}}\!\left[\log \sigma\!\left(\beta\bigl(\bar{r}_\theta(x, y_w) - \bar{r}_\theta(x, y_l)\bigr)\right)\right],
    \quad
    \bar{r}_\theta(x, y) = \frac{1}{|y|}\log\frac{\pi_\theta(y \mid x)}{\pi_{\text{ref}}(y \mid x)}.
    \label{eq:dpo_length_normalized}
\end{equation}
We adopt DPO rather than online RL with a learned reward model for two reasons. First, our preference pairs are constructed from the same model's output distribution under matched contexts, and DPO's closed-form, reference-anchored objective is a natural fit for this self-paired setting. Second, Hint-$\delta$ already provides an explicit chosen/rejected signal at the pair level; routing this signal through a separately trained reward model would introduce an additional information bottleneck and approximation error without any clear benefit.

The objective of this DPO training is hint internalization. By training on these pairs, the Generator is incentivized to favor the structural and stylistic patterns present in the hint-guided response, including more deliberate decomposition of the problem and more disciplined use of intermediate steps. As a result, the model tends to reproduce this higher-quality content independently, without requiring the explicit hint from the Proposer at inference time. This enables the Generator to perform substantially better on complex tasks during inference when no external assistance is available.
\paragraph{Training Set Curation.}
To maximize the efficacy of the DPO phase, we impose stringent filtering criteria on the preference pairs comprising $\mathcal{D}_{R+1}$. The Proposer's GRPO training, by maximizing Hint-$\delta$, has already performed a first stage of selection: the $(q, h)$ pairs it produces concentrate on hard queries equipped with informative hints. Our data curation performs a complementary selection on top of this pool to evaluate whether each pair is well-suited for DPO.

For each query-hint pair $(q, h) \sim \pi_P$, we sample the Generator's dual responses: the unassisted baseline $\ahard \sim \pi_G(\cdot \mid q)$ and the hint-conditional response $\aassisted \sim \pi_G(\cdot \mid q, h)$. We then recompute the $\delta$ score on these freshly sampled responses (Eq.~\eqref{eq:hintdelta}) and retain only pairs whose $\delta$ falls in the \emph{lower half} of the empirical distribution within each round.

While the Proposer targets the Generator's blind spots by maximizing $\delta$, we apply a contrasting filtering strategy for DPO data curation. In this stage, $\delta$ functions as a proxy for the distributional distance between $\aassisted$ and $\ahard$. Within the generated pool of complex queries, explicitly retaining preference pairs with relatively lower $\delta$ is essential, driven by two fundamental reasons:

\paragraph{Lower-$\delta$ pairs serve as hard-to-distinguish training signals.} In preference learning, training on pairs with a massive quality gap often yields diminishing returns, as the preference is trivially satisfied. A lower $\delta$ indicates that the log-probability shift between the chosen ($\aassisted$) and rejected ($\ahard$) responses is relatively minor. Consequently, these constitute hard-to-distinguish preference pairs. By focusing on these pairs where the reward gap is small, DPO is forced to learn fine-grained, structural improvements in reasoning rather than relying on superficial, easy-to-spot differences.

\paragraph{High-$\delta$ pairs violate DPO's implicit KL-divergence constraint.} The DPO formulation inherently includes a KL-divergence penalty against the reference model $\pi_{\text{ref}}$. A very high $\delta$ implies that the hint-assisted response $\aassisted$ is drastically far away from the Generator's original unassisted distribution. Pushing the policy towards such completely out-of-distribution responses severely violates this implicit KL constraint. This can lead to excessively large gradients, off-manifold drift, and severe training instability. By filtering out the top half of the $\delta$ distribution, we naturally regularize the optimization process and ensure $\aassisted$ remains a plausible trajectory for the Generator to internalize.

A subtle point concerns the very bottom of the retained band. Pairs with near-zero $\delta$ have low \emph{implicit-reward} margin under $\pi_G$, since the Generator assigns similar log-probabilities to $\aassisted$ and $\ahard$. We deliberately keep these pairs rather than excluding them so that the lower-half filter is defined purely by the ranked $\delta$ statistic without an additional minimum-margin cutoff: the bulk of the constructive learning signal in $\mathcal{D}_{R+1}$ is carried by the middle of the lower-half band, while the tail near zero adds only a small amount of low-margin label noise on responses that are independently sampled at temperature $0.7$ and therefore lexically distinct.

\subsection{Theoretical Analysis}

We analyze G-ZERO as an iterative $\delta$-certified exploratory DPO
procedure. We consider a simple linear case such that there exists a ground truth reward $R^\star(q,a) = \phi(q,a)^\top \theta^\star$ for any question $q$ and response $a$, where $\phi$ is a feature and $\theta^\star \in \mathbb{R}^d$ is a hidden reward parameter. We assume the standard Bradley-Terry model \cite{rafailov2023direct} such that $\mathbb{P}(a^+ \succ a^- |q) = \sigma\left(R^\star(q,a^+) - R^\star(q,a^-)\right)$. The performance of the generator has the following guarantee.
\begin{theorem} (Informal)
Suppose the game collects retained data from the Proposer such that, after $\delta$-filtering, the data are sufficiently exploratory, and the Generator is updated iteratively by DPO on the cumulative retained data for $T$ rounds. Then, with high probability, there exists an iterate $t_0\le T$ such that, using a total number of retained samples $\widetilde O(d^2/\varepsilon^2)$, the Generator's policy $\pi_{t_0}$ satisfies
\vspace{-2.7mm}

\[
J(\pi^\star)-J(\pi_{t_0})
\le
\widetilde O\!\left(
\varepsilon
+\sqrt{\eta_\delta}
\right).
\]

\vspace{-2.2mm}
where $\widetilde O$ omits $\log$ factors, 
$J(\pi)
=
\mathbb E_{q \sim Q, a\sim\pi(\cdot\mid q)}
[R^\star(q,a)]
-
\mathbb E_{q \sim Q }
\left[D_{\rm KL}\!\left(\pi(\cdot\mid q)\|\pi_{\mathrm{ref}}(\cdot\mid q)\right)\right]$, $Q$ is the target question distribution, $\pi^\star = \argmax_{\pi} J(\pi)$ and $\eta_\delta$ denotes the self-normalized cumulative score noise induced by incorrect pseudo-labels after $\delta$ filtration.
\label{thm:G-zero converge}
\end{theorem}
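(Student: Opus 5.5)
The plan is to reduce the iterative $\delta$-certified DPO procedure to online linear-reward estimation under the Bradley--Terry model, and then convert the reward-estimation error into a suboptimality bound for the KL-regularized objective $J$.

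\textbf{Step 1: equivalence of DPO and reward MLE.} In the linear setting, minimizing the standard DPO loss (Eq.~\eqref{eq:dpo}) over a policy class that contains all Gibbs policies $\pi_\theta(a\mid q)\propto \pi_{\mathrm{ref}}(a\mid q)\exp(\phi(q,a)^\top\theta/\beta)$ is the same as logistic maximum likelihood for $\theta$ on the cumulative retained pairs $\mathcal D_t=\bigcup_{s\le t}\mathcal D_s$. The round-$t$ Generator is therefore $\pi_t=\pi_{\hat\theta_t}$. The KL-optimal policy is $\pi^\star=\pi_{\theta^\star}$, with $\beta$ absorbed into the KL weight in $J$.

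\textbf{Step 2: concentration of $\hat\theta_t$ under noisy labels.} Write $\Sigma_t=\lambda I+\sum_{(q,a^+,a^-)\in\mathcal D_t} z z^\top$, where $z=\phi(q,a^+)-\phi(q,a^-)$. The first-order condition of the logistic MLE splits into two parts. The first is a martingale term from Bernoulli label noise; it is controlled by a self-normalized bound of Abbasi-Yadkori type, $\|\cdot\|_{\Sigma_t^{-1}}\lesssim\sqrt{d\log(t/\rho)}$. The second is a bias term from pseudo-labels flipped relative to the Bradley--Terry draw, namely the hint-assisted response being labeled chosen when it is actually worse. I define $\eta_\delta$ as exactly the squared $\Sigma_t^{-1}$-norm of the cumulative score of these flipped pairs, i.e.\ the ``self-normalized cumulative score noise.'' Using strong convexity of the logistic loss with curvature $\kappa^{-1}$ on a bounded $\theta$ ball, I get
\[
\|\hat\theta_t-\theta^\star\|_{\Sigma_t}\lesssim \kappa\bigl(\sqrt{d\log(T/\rho)}+\sqrt{\eta_\delta}\bigr)
\]
uniformly over $t\le T$ with probability $1-\rho$.

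\textbf{Step 3: from parameter error to suboptimality.} For Gibbs policies, $J(\pi^\star)-J(\pi_{\hat\theta})$ is a KL divergence between $\pi^\star$ and $\pi_{\hat\theta}$. It is bounded by $\mathbb E_{q\sim Q}$ of a reward-difference variance, and hence by
\[
\mathbb E_{q\sim Q,\,a\sim\pi^\star,\,a'\sim\pi_t}\,\bigl|(\phi(q,a)-\phi(q,a'))^\top(\hat\theta_t-\theta^\star)\bigr|\le \|\hat\theta_t-\theta^\star\|_{\Sigma_t}\cdot \mathbb E\|\phi(q,a)-\phi(q,a')\|_{\Sigma_t^{-1}}.
\]
The exploration assumption enters here. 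I formalize ``sufficiently exploratory after $\delta$-filtering'' as the requirement that the retained round-$t$ pairs have feature differences whose covariance dominates, up to a coverage constant, the target-relevant directions $\mathbb E_{Q,\pi^\star,\pi_t}[(\phi-\phi')(\phi-\phi')^\top]$. In other words, the Proposer's distribution plus the $\pi_t$ rollouts cover the directions along which $\pi_t$ and $\pi^\star$ differ.

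\textbf{Step 4: best-iterate via elliptical potential.} Summing over $t\le T$ and applying the elliptical potential lemma gives $\sum_t \mathbb E\|\phi-\phi'\|^2_{\Sigma_{t-1}^{-1}}\lesssim d\log T$. Combining this with Step 2 and Cauchy--Schwarz yields
\[
\min_{t\le T}\bigl(J(\pi^\star)-J(\pi_t)\bigr)\le \frac1T\sum_t(\cdot)\lesssim \bigl(\sqrt d+\sqrt{\eta_\delta}\bigr)\sqrt{d/T}\cdot\mathrm{polylog},
\]
after per-round sampling concentration (matrix Bernstein) to pass from expected to empirical covariance. Choosing $T\cdot n=\widetilde O(d^2/\varepsilon^2)$ total retained samples makes the first term $\varepsilon$. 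The noise term appears as $\sqrt{\eta_\delta}$, after normalizing $\eta_\delta$ by the sample count in its definition.

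\textbf{Main obstacle.} I expect the main obstacle to be Step 2's handling of pseudo-label bias. Because labels are chosen by $\delta$-filtering that depends on the data, the flip noise is neither independent nor mean-zero. My first attempt is to avoid any probabilistic claim about it: I would simply carry it as the deterministic quantity $\eta_\delta$ through the MLE first-order condition, and isolate the genuine martingale part on the $\sigma$-algebra generated by the Proposer and the filter up to round $t$.
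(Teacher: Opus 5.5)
Your architecture matches the paper's proof. It consists of DPO as regularized logistic MLE over the Gibbs class, curvature $\kappa_{\rm BT}$ on the parameter ball plus a self-normalized bound for the clean Bradley--Terry score, and the flipped-label score $\sum \mathbf{1}\{\widetilde Y\neq Y^\star\}\,Y^\star w$ carried as a certified quantity (exactly the paper's $\delta$-certification assumption). It then uses Cauchy--Schwarz in the $\Sigma_t$-norm, and coverage plus an elliptical potential. Two local repairs are needed.
\begin{itemize}
\item The curvature argument needs $\widehat\theta_t$ minimized over $\Theta=\{\|\theta\|_2\le B\}$ with the ridge term $\kappa_{\rm BT}\lambda\|\theta\|_2^2/2$, using the constrained first-order condition, not over all Gibbs policies.
\item In Step 3 your displayed inequality is true, but not via a variance bound, since the mean-value form of the KL involves an intermediate Gibbs policy rather than $\pi^\star\otimes\pi_t$. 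The clean route is $J_{Q,\widehat\theta_t}(\pi_t)\ge J_{Q,\widehat\theta_t}(\pi^\star)$. This gives $J(\pi^\star)-J(\pi_t)\le\mathbb{E}_{q\sim Q}[(\theta^\star-\widehat\theta_t)^\top v_t(q)]$ with $v_t(q)=\bar\phi(q,\pi^\star)-\bar\phi(q,\pi_t)$.
\end{itemize}
Your random-pair quantity dominates $\|v_t(q)\|_{\Sigma_t^{-1}}$ and does not vanish as $\pi_t\to\pi^\star$, so your coverage condition asks for more. The paper instead imposes $\widehat\Psi_t^2\ge\alpha_{\rm S}^2\,\mathbb{E}_{q\sim\rho_t}\|v_t(q)\|_{\Sigma_t^{-1}}^2$ directly on the realized batch. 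It isolates the mismatch in $\|dQ/d\rho_t\|_\infty\le C_Q$, with no concentration step.

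The step that fails as written is Step 4. With $n$ pairs per round and your unnormalized $\Sigma_t$, the potential only controls $\sum_t\log(1+n\widehat\Psi_t^2)\le d\log(1+nT/(\lambda d))$, where $\widehat\Psi_t^2=\mathrm{tr}(\Sigma_t^{-1}\widehat M_t)$. In early rounds $\Sigma_t\approx\lambda I$, so $n\widehat\Psi_t^2$ is of order $n/\lambda$ and the logarithm cannot be linearized there. The linear sum $\sum_t\mathbb{E}\|\phi-\phi'\|_{\Sigma_t^{-1}}^2$ is dominated by those rounds and is of order $d\log T/\lambda$, not $d\log T/n$. Averaging therefore gives roughly $\kappa_{\rm BT}^{-1}d/\sqrt{T}$, with no gain from $n$. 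Making this $\varepsilon$ needs $T\approx d^2/\varepsilon^2$ rounds, so $Tn=\widetilde O(d^2/\varepsilon^2)$ forces $n=O(1)$, which makes your per-round concentration step vacuous.

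The missing idea is the paper's best-iterate extraction: do not average. Pick $t_0$ with $\log(1+\alpha_{\rm S}^2C_Q^{-1}\Psi_{Q,t_0}^2)\le D_T/T$, and run only $T=\widetilde\Theta(C_Qd/\alpha_{\rm S}^2)$ rounds. Then $\Psi_{Q,t_0}=\widetilde O(1)$ for the $1/m$-normalized $\Sigma_{t+1}=\Sigma_t+\widehat M_t$. Put the $1/\varepsilon^2$ into the batch size $m$ (your $n$) via $s_{m,T,\zeta}=\sqrt{2(D_T+\log(T/\zeta))/m}$; this normalization also makes the noise enter as $\kappa_{\rm BT}^{-1}\sqrt{\eta_\delta}$. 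If you keep population-level coverage, transfer it with a scalar multiplicative Chernoff bound on $\mathrm{tr}(\Sigma_t^{-1}\widehat M_t)$. This is valid because $\Sigma_t$ is fixed before round $t$, and it costs an additive $O(\log(T/\zeta)/(\lambda n))$. An operator-norm matrix Bernstein bound traced against $\Sigma_t^{-1}$ can lose a factor of $d$.
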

The theorem separates the two intrinsic signals in G-ZERO. The
Hint-$\delta$ and filter controls data quality: if retained pairs are calibrated so
that $a_{\rm assisted}$ is truly better than $a_{\rm hard}$ with high
probability, then the pseudo-label noise $\eta_\delta$ is small. The
exploration reward (implemented by BLEU) for the challenger controls data coverage: it drives the challenger toward
pairwise feature directions that the generator has not yet learned. Together,
these two effects imply an iterative co-evolution guarantee: the Challenger
supplies preference pairs that are both reliable enough to trust and novel
enough to teach, while cumulative DPO distills them into the Generator. Detailed proof of Theorem \ref{thm:G-zero converge} is provided in Appendix \ref{app:proof}.

\section{Experiments}
\label{sec:exp}

\subsection{Experimental Setup}
\label{sec:setup}

\paragraph{Models.}
To evaluate the generalization capabilities of our proposed method, we evaluate Qwen3-8B-Base~\citep{yang2025qwen3} and Llama-3.1-8B-Instruct~\citep{grattafiori2024llama}. By testing on both a foundational base model and an instruction-tuned model from distinct, widely adopted families, we demonstrate that our approach is robust to architectural variations and effective regardless of prior alignment stages.
\paragraph{Benchmarks and Evaluation.}
To evaluate reasoning capabilities, we benchmark on AIME24 and AIME25, reporting the overall mean@32 score from 32 independent responses sampled at a temperature of 0.7. To evaluate instruction-following, we use IFEval \citep{zhou2023instruction} with greedy decoding, reporting the four standard metrics (prompt/instruction-level strict and loose accuracies). Lastly, to assess general conversational quality, we report the length-controlled win rate on AlpacaEval 2.0 \citep{dubois2024length} against GPT-4-Turbo, judged by the Qwen3-235B-A22B-Instruct-2507.

\paragraph{Experiment Configuration.}
We strictly standardize the hyperparameter settings across the iterative loop. All model training in our experiments is conducted via the Tinker API~\footnote{https://thinkingmachines.ai/tinker/}, exclusively utilizing Low-Rank Adaptation (LoRA)~\citep{hu2022lora}.

We supplement the $\delta$-based filter with a set of lightweight heuristic checks on the chosen response $\aassisted$ to remove pairs that are known to induce DPO artifacts, following standard practice in DPO data curation. Specifically, to prevent the model from learning length as a proxy for quality, we discard pairs exhibiting length inflation ($l_w / l_l > 2.5$, where $l_w$ and $l_l$ are the character lengths of the chosen $\aassisted$ ($y_w$) and rejected $\ahard$ ($y_l$) responses). We also enforce absolute length bounds, requiring $l_w \in [100, 10000]$ to avoid degenerate gradients from extremely short or long responses. Furthermore, to prevent repetition collapse, we discard responses with a zlib compression ratio $< 0.15$, which reliably flags repetitive or degenerate text (highly repetitive sequences compress to a small fraction of their original size). Finally, we filter out instances of prompt echoing, discarding pairs where $\aassisted$ shares a prefix of $\ge 30$ characters with $q$, as well as template leakage, removing any responses containing raw role markers (e.g., ``Assistant:''). The remaining high-quality pairs form the final dataset $\mathcal{D}_{R+1}$.
We show all hyperparameters in Appendix~\ref{app:hyperparams}, prompts and templates in Appendix~\ref{app:prompt}. 

\begin{table}[t!]
\centering
\small
\caption{Main results. Absolute performance (\%) of G-Zero after round 1 and round 2.}
\label{tab:main}
\begin{tabular}{lcccccccc}
\toprule
\multirow{2}{*}{} & Chat & \multicolumn{4}{c}{IFEval} & \multicolumn{2}{c}{Math} & \multirow{2}{*}{Average} \\
\cmidrule(lr){2-2} \cmidrule(lr){3-6} \cmidrule(lr){7-8}
& AlpLC & IF-pS & IF-pL & IF-iS & IF-iL & AIME24 & AIME25 & \\
\midrule
\multicolumn{9}{l}{\textbf{Qwen3-8B-Base}} \\
base model & 8.94 & 43.07 & 50.28 & 56.00 & 61.75 & 10.42 & 7.19 & 33.95 \\
R-Zero & 8.04 & 37.56 & 42.79 & 54.12 & 57.37 & \textbf{14.92} & \textbf{13.34} & 32.59 \\
G-Zero (round 1) & \textbf{9.07} & 42.33 & \textbf{50.83} & 56.24 & 62.71 & 12.19 & 11.36 & 34.96 \\
G-Zero (round 2) & 8.47 & \textbf{43.81} & \textbf{50.83} & \textbf{57.92} & \textbf{63.43} & 11.15 & 12.40 & \textbf{35.43} \\
\midrule
\multicolumn{9}{l}{\textbf{Llama-3.1-8B-Instruct}} \\
base model & 24.12 & 58.41 & 65.80 & 69.42 & 75.30 & 5.94 & 0.42 & 42.77 \\
R-Zero & 21.74 & 56.94 & 62.47 & 66.74 & 72.93 & 5.07 & 0.34 & 40.89 \\
G-Zero (round 1) & 23.88 & 58.59 & \textbf{66.35} & 69.78 & 75.54 & 6.35 & \textbf{1.04} & 43.08 \\
G-Zero (round 2) & \textbf{27.86} & \textbf{59.52} & \textbf{66.35} & \textbf{70.38} & \textbf{75.78} & \textbf{6.77} & 0.63 & \textbf{43.90} \\
\bottomrule
\end{tabular}
\end{table}

\subsection{Main Results}
\label{sec:mainresults}

Table~\ref{tab:main} reports the absolute performance of our co-evolutionary pipeline across two distinct model families, highlighting three key trends. First, our method strictly enhances overall capabilities without external supervision. After two rounds, we observe average absolute gains from 33.95\% to 35.43\% on Qwen3-8B-Base and from 42.77\% to 43.90\% on Llama-3.1-8B-Instruct. Second, these improvements compound iteratively; for Llama-3.1, a modest Round 1 gain amplifies significantly by Round 2, validating that the improved model acts as a stronger foundation for subsequent cycles.

Interestingly, our framework naturally targets different capability bottlenecks depending on the model's prior alignment. For the foundational Qwen3, it drives massive improvements in hard reasoning and strict formatting, such as AIME25 jumping from 7.19\% to 12.40\%  and IF-iS increasing from 56.00\% to 57.92\%. Conversely, for the instruction-tuned Llama-3.1, the most prominent gain emerges in conversational alignment, with AlpacaEval LC surging from 24.12\% to 27.86\%.

Finally, compared to R-Zero, our approach demonstrates robust generalizability across capability axes. R-Zero achieves strong targeted gains in mathematical reasoning (e.g., AIME24 on Qwen3 increasing from 10.42\% to 14.92\%), but trades off other capabilities: its conversational performance on Qwen3 drops from 8.94\% to 8.04\%, IFEval-pS drops from 43.07\% to 37.56\%, and its overall 7-metric average on Llama-3.1 falls from 42.77\% to 40.89\%. In contrast, G-Zero R2 keeps all per-metric movements small in magnitude on both models: on Qwen3 the only regression is AlpLC ($-0.47$), with all other six metrics improving; on Llama-Instruct every metric is positive at R2, with AlpLC itself gaining $+3.74$.

\section{Analysis}
\label{sec:analysis}

\paragraph{Structural Transfer from Non-Verifiable Tasks}
Table~\ref{tab:deltacategory_filtered} shows that the final DPO pool is overwhelmingly dominated by non-verifiable tasks, with categories such as advice, writing, and others collectively accounting for over 70\% of the training data. In contrast, verifiable tasks like math and code represent less than 19\% of the pool. Crucially, the highest reward signals ($\delta$) originate from structured writing and detailed explanations rather than math. This confirms that \ours{}'s substantial reasoning improvements do not come from domain-specific memorization, but from internalizing the logical depth and compositional complexity of the Challenger's trajectories and successfully transferring these structural paradigms to mathematical problem-solving.

\begin{table*}[t!]
    \centering
    \small
    
    \begin{minipage}[t]{0.4\textwidth}
        \centering
        \caption{Composition of the DPO pool after $\delta$-filter on Qwen3-8B-Base R1.}
        \label{tab:deltacategory_filtered}
        \begin{tabular}{lcc}
        \toprule
        Category & \% Pool & Mean $\delta$ \\
        \midrule
        Math    & 9.6\%  & 0.045 \\
        Code    & 9.0\%  & 0.040 \\
        Explain & 9.6\%  & 0.058 \\
        Advice  & 30.2\% & 0.045 \\
        Writing & 17.4\% & 0.060 \\
        Other   & 24.1\% & 0.056 \\
        \bottomrule
        \end{tabular}
    \end{minipage}%
    \hfill
    \begin{minipage}[t]{0.56\textwidth}
        \centering
        \caption{$\delta$-filter design space on Qwen3-8B-Base (R1). Absolute performance (\%). The filter intervals $[x, y]$ denote the percentile range of the retained $\delta$ values.}
        \label{tab:cutoff}
        \begin{tabular}{lcccc}
\toprule
$\delta$ filter & Chat & IFEval & Math & Avg. \\
\midrule
base model         & 8.94  & 52.78 & 8.81  & 33.95 \\
$[0, 50]$ (\ours{}) & 9.07  & 53.03 & 11.78 & \textbf{34.96} \\
$[20, 80]$         & 9.07  & 51.82 & \textbf{12.54} & 34.40 \\
$[50, 100]$        & \textbf{9.68}  & 51.97 & 10.37 & 34.04 \\
$[0, 100]$         & 9.10  & \textbf{53.08} & 10.58 & 34.65 \\
\bottomrule
\end{tabular}
    \end{minipage}
    
\end{table*}

\paragraph{The Necessity of the Lower-Half Filter.}
Table~\ref{tab:cutoff} evaluates alternative $\delta$ selection strategies to validate our $[0, 50]$ filter. We hypothesize a mechanistic dichotomy: low-$\delta$ pairs capture structural and logical refinements, while high-$\delta$ pairs are more susceptible to ``answer leakage,'' where the Proposer's hint explicitly provides the solution. The pattern in Table~\ref{tab:cutoff} is consistent with this dichotomy, though the magnitudes are modest. Retaining the upper half ($[50, 100]$) trades verifiable instruction following (IFEval drops by $0.81$\,pp from $52.78$ to $51.97$) for chat-style helpfulness (AlpacaEval LC reaches the highest value, $9.68$), suggesting the Generator partially absorbs hint content directly rather than internalizing reasoning steps. Removing the filter entirely ($[0, 100]$) yields a slight overall improvement in two of the three domains (IFEval $53.08$ vs.\ $53.03$ for $[0, 50]$; Chat $9.10$ vs.\ $9.07$) at the cost of a $1.20$\,pp drop in Math, while the $[20, 80]$ middle band gives a higher Math score ($12.54$) but lower IFEval. The $[0, 50]$ filter therefore offers the most balanced profile we tested rather than a strictly dominant configuration; we adopt it as the default but note that the surrounding band $[0, 50] \pm 30$\,pp produces broadly comparable averages.

\paragraph{Capability Scaling Dynamics.}
Figure~\ref{fig:scaling_dynamics} tracks the performance trajectory ($\Delta$) across incremental DPO pool sizes ($N \in \{100, 200, 400, 730\}$) versus the global from-scratch optimization (Round~2). The scaling behaviors reveal how the Generator internalizes the structural paradigms elicited by the Proposer. Mathematical reasoning (Math) exhibits rapid, monotonic gains and saturates early: $+1.24$\ at $N=100$ already covers more than $40\%$ of the final $+2.97$ at $N=730$, and the Round~2 from-scratch training matches the same final value ($+2.96$). This suggests the Generator absorbs the underlying logical structure with minimal data and quickly reaches its inherent capacity limit for structural reasoning. Verifiable instruction following (IFEval) shows the opposite shape, the incremental schedule starts at $-0.96$ ($N=100$), monotonically recovers to a small positive ($+0.25$ at $N=730$), and only the global Round~2 from-scratch optimization fully unlocks the capability at $+1.22$. General conversational helpfulness (AlpacaEval LC) is largely flat under incremental DPO --- it stays in the $[+0.13,\, +0.38]$ band across all four checkpoints --- while the Round~2 from-scratch model ends at $-0.47$\,pp. The three capability axes therefore exhibit distinct, axis-specific saturation curves rather than a uniform trade-off, and Round~2 dominates incremental scaling on the IFEval and overall metric while the two coincide on Math.

\begin{figure*}[t] 
    \centering
    \begin{minipage}[t]{0.48\linewidth}
        \centering
        \includegraphics[width=\linewidth]{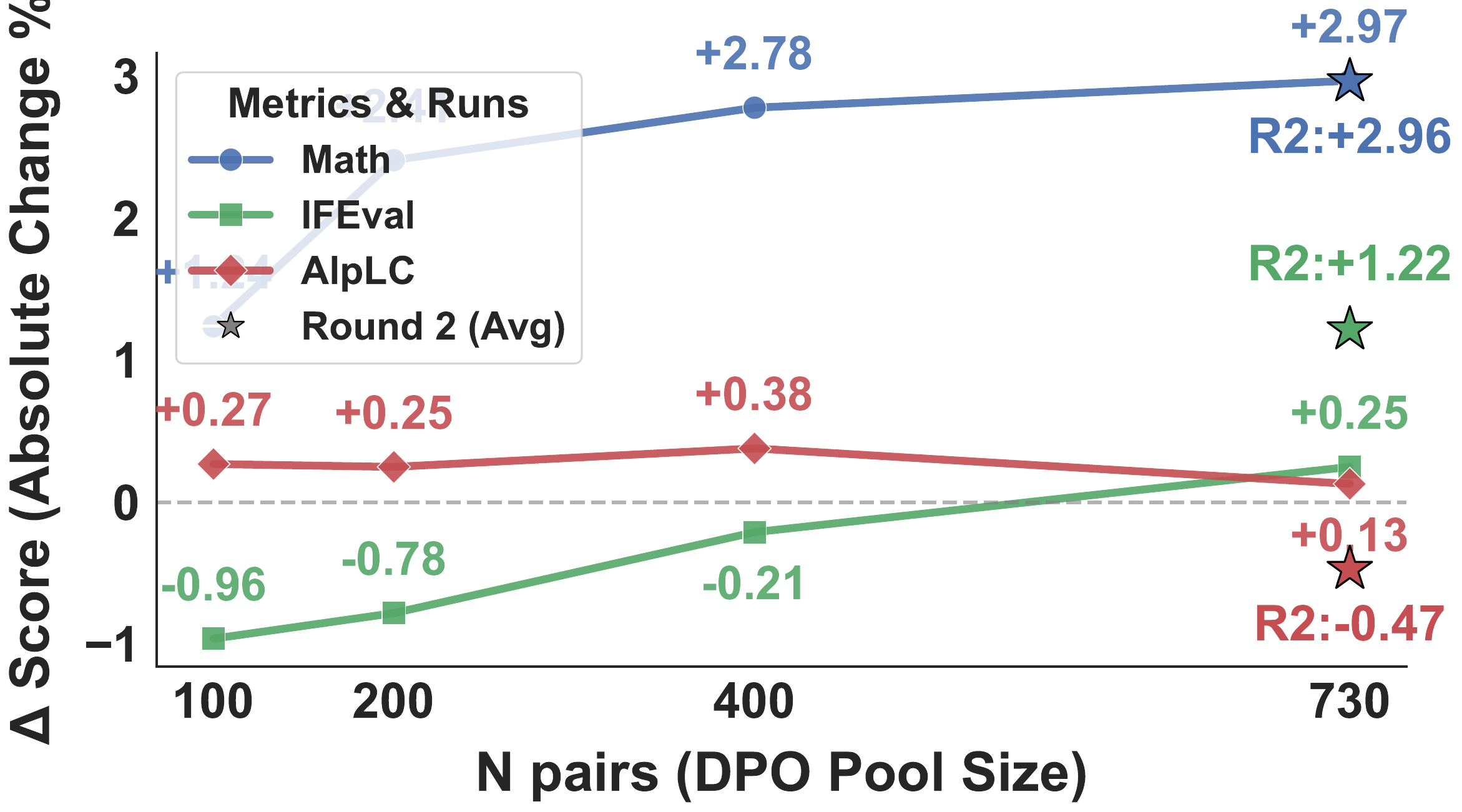}
        \caption{Performance change ($\Delta$) relative to the base model across incremental DPO pool sizes ($N \in \{100, 200, 400, 730\}$) versus the from-scratch optimization in Round~2 (star at $N=730$).}
        \label{fig:scaling_dynamics}
    \end{minipage}
    \hfill%
    \begin{minipage}[t]{0.48\linewidth}
        \centering
        \includegraphics[width=\linewidth]{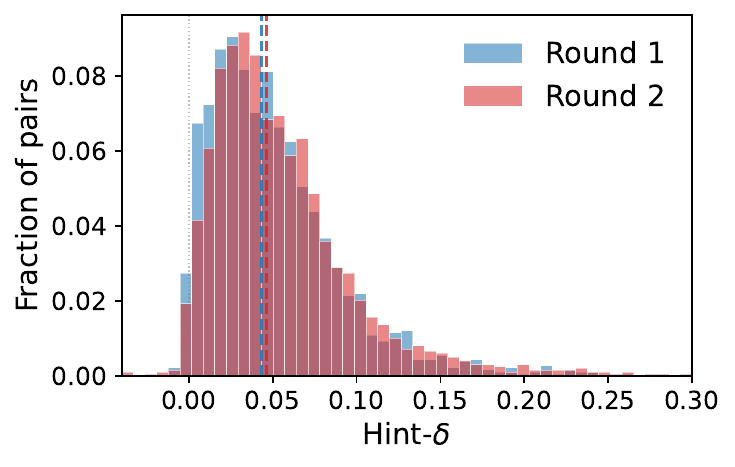} 
        \caption{Empirical distributions of Hint-$\delta$ for Round 1 and Round 2. The dashed vertical lines denote the median $\delta$ for each round. The rightward shift in Round 2 demonstrates the co-evolutionary dynamic: as the Generator becomes more capable, the Proposer adapts by synthesizing increasingly impactful hints, thereby pushing the intrinsic learning signal to a higher baseline.}
        \label{fig:delta_shift}
    \end{minipage}
\end{figure*}
\paragraph{The Shifting Distribution in Different Rounds.}
Figure~\ref{fig:delta_shift} reveals a distinct rightward shift in the Hint-$\delta$ distribution from Round 1 to Round 2, accompanied by an increased median. Counter-intuitively, this baseline \textit{increases} even though the Generator has become a stronger reasoner. This evidences a co-evolutionary arms race: because the upgraded Generator is no longer perturbed by trivial assistance, the Proposer must adapt by synthesizing profoundly complex hints to maximize its GRPO reward. By continuously uncovering newly elevated blind spots, the Proposer raises the difficulty ceiling and prevents the Generator from stagnating across iterations.

\section{Related Work}
\subsection{Self-Evolving Language Models}
Self-evolution enhances large language model (LLM) reasoning without human annotations. Early frameworks~\cite{huang2022largelanguagemodelsselfimprove,selfinstruct} leveraged fine-tuning on high-confidence self-generated trajectories. This progressed into iterative self-play~\cite{chen2024selfplayfinetuningconvertsweak} and multi-role co-evolution pipelines~\cite{lin2025learningsolveverifyselfplay,chen2025spcevolvingselfplaycritic}, which mitigate feedback saturation through cross-verification~\cite{fang2025serlselfplayreinforcementlearning,shafayat2025largereasoningmodelsselftrain}. Recently, the field has shifted toward dynamic self-challenging~\cite{zhou2025selfchallenginglanguagemodelagents,liang2025swsselfawareweaknessdrivenproblem, liu2024mmc} and unsupervised post-training~\cite{wei2025sftsecondrlupt}, marking a transition from supervised imitation~\cite{yu2025guided} to intrinsically verifiable, zero-data frameworks~\cite{huang2025r,zhao2025absolute,he2025visplay,li2026mmzeroselfevolvingmultimodelvision}. However, robust filtering remains essential, as unconstrained recursive training on synthetic outputs risks model collapse~\cite{shumailov2024ai}.

\subsection{Verifier-Free RL}

To address the reliance of Reinforcement Learning from Verifier Feedback (RLVR) on explicit rules, recent works explore verifier-free paradigms for open-ended domains by extracting intrinsic rewards directly from the generation process. Foundational methods~\cite{zhou2025reinforcing, liu2025nover} bypass external verifiers by optimizing the conditional probability of reference answers. To stabilize training and prevent reasoning degradation (e.g., CoT shortening), likelihood-based designs are further refined into smooth, dense rewards that reduce gradient variance~\cite{kwiatkowski2026likelihood, yu2025rlpr}. Beyond final-answer probabilities, recent advancements construct step-wise optimization signals by leveraging internal hidden states as implicit verifiers~\cite{xiao2026reinforcement} or modeling reasoning as a continuous probabilistic flow~\cite{liu2026efficient}, establishing a foundation for self-evolution in completely unverifiable environments.
\section{Conclusion}

In this work, we introduced \ours{}, a verifier-free framework that enables LLMs to self-improve in open-ended and unverifiable domains. By replacing external judges with a single internal signal(\textbf{Hint-$\delta$})—our approach naturally measures both query difficulty and hint informativeness while avoiding the bottlenecks of traditional alignment. Through a continuous loop where a Proposer targets the model's blind spots and a Generator internalizes these hints, \ours{} drives autonomous self-evolution. Ultimately, this proves that models can elevate their capabilities using only intrinsic feedback, paving the way for self-aligning systems completely independent of human ground truth.

\section*{Acknowledgement}

We gratefully acknowledge the Thinking Machines Lab Tinker Research Grant for supporting the experimental efforts of this work. This research was also supported in part by the NVIDIA Academic Grant Program and WashU Ignite Interdisciplinary Grants.

\bibliography{references}
\bibliographystyle{plainnat}

\newpage
\appendix
\section{Prompts and Templates}
\label{app:prompt}
\begin{tcolorbox}[
  colback=blue!5!white,
  colframe=blue!75!black,
  title=\textbf{Configuration: Proposer (Phase 1 GRPO and Phase 2 generation)},
  fonttitle=\bfseries,
  sharp corners,
  breakable
]
\begin{itemize}
    \item \textbf{Model}: base policy (Qwen/Qwen3-8B-Base by default)
    \item \textbf{Temperature}: 1.0
    \item \textbf{Max tokens}: 8\,192
    \item \textbf{System message}: \textit{(none)}
\end{itemize}

\textbf{User Message:}
\begin{quote}
Produce one challenging request that a real user might ask a capable assistant, plus a short hint that helps the assistant give a noticeably better response.

The request should come from a general-domain distribution. Sample across task types, not from a single area. Examples of task types you can draw from: \\
  - writing (email, story, essay, pitch, review, poem) \\
  - explanation (make a concept clear to a specific audience) \\
  - advice or planning (career, travel, project, learning) \\
  - analysis (argument, text, dataset description, product) \\
  - coding (small function, debugging, design question) \\
  - role-play, dialogue, or creative tasks \\
  - open-ended questions about ethics, science, everyday life \\
  - reasoning, math, or logic problems (fine to include — roughly 1 in 6 requests, no more)

Weight the non-math categories above heavily. A little math is good for diversity, but it should not dominate — favor tasks where the response quality depends on tone, structure, audience-fit, clarity, or creativity, not just arithmetic correctness.

Requirements: \\
- The request must be self-contained and non-trivial to answer well. \\
- The hint must guide the approach (e.g. tone, structure, what to   include, what to avoid) but must not give away the full answer. \\
- Wrap the request in <question> and </question> tags. \\
- Wrap the hint in <hint> and </hint> tags. \\
- Output nothing else before, between, or after the tagged blocks.

Example 1 (writing): \\
<question>Write a resignation email to my manager that keeps the door open for future collaboration. I've been at the company for 4 years and I'm leaving to join a competitor. Tone should be professional and warm without being effusive.</question> \\
<hint>Lead with gratitude for specific experiences rather than generic thanks, keep the departure reason brief and non-defensive, and close with a concrete offer to help during the transition.</hint>

Example 2 (explanation): \\
<question>Explain what a Kalman filter does to a software engineer who is comfortable with linear algebra but has never touched signal processing. Avoid control-theory jargon where possible.</question> \\
<hint>Frame it as 'recursive Bayesian least-squares on a noisy dynamical system' and build up from a 1-D example (estimating position from noisy GPS) before generalizing.</hint>

Now produce one new request and hint of your own:
\end{quote}
\end{tcolorbox}

\begin{tcolorbox}[
  colback=blue!5!white,
  colframe=blue!75!black,
  title=\textbf{Configuration: Generator (Phase 2 dual-response sampling)},
  fonttitle=\bfseries,
  sharp corners,
  breakable
]
\begin{itemize}
    \item \textbf{Model}: round-$R$ Generator checkpoint (base for round 1)
    \item \textbf{Temperature}: 0.7
    \item \textbf{Max tokens}: 16\,384
    \item \textbf{System message}: \textit{(none)} \\
\end{itemize}

\textbf{User Message ($a_{\mathrm{hard}}$, no-hint):}
\begin{quote}
\texttt{\{question\}}
\end{quote}

\textbf{User Message ($a_{\mathrm{assisted}}$, with hint):}
\begin{quote}
\texttt{\{question\}} \\[3pt]
\\[3pt]
\texttt{Hint: \{hint\}}
\end{quote}
\end{tcolorbox}

\begin{tcolorbox}[
  colback=blue!5!white,
  colframe=blue!75!black,
  title=\textbf{Configuration: AIME24 / AIME25 evaluation (mean@32)},
  fonttitle=\bfseries,
  sharp corners,
  breakable
]
\begin{itemize}
    \item \textbf{Model}: trained Generator checkpoint
    \item \textbf{Temperature}: 0.7 \quad (\textbf{not} 0; mean@$k$ requires variance)
    \item \textbf{Samples per problem}: 32
    \item \textbf{Max tokens}: 16\,384
\end{itemize}

\textbf{System Message:}
\begin{quote}
Please reason step by step, and put your final answer within \textbackslash{}boxed\{...\}.
\end{quote}

\textbf{User Message:}
\begin{quote}
\texttt{\{question\}}
\end{quote}
\end{tcolorbox}

\begin{tcolorbox}[
  colback=blue!5!white,
  colframe=blue!75!black,
  title=\textbf{Configuration: IFEval evaluation},
  fonttitle=\bfseries,
  sharp corners,
  breakable
]
\begin{itemize}
    \item \textbf{Model}: trained Generator checkpoint
    \item \textbf{Temperature}: 0.0 (greedy)
    \item \textbf{Max tokens}: 16\,384
    \item \textbf{Scorer}: rule-based verifier from \texttt{instruction\_following\_eval}
\end{itemize}

\textbf{System Message:}
\begin{quote}
You are a helpful assistant. Follow the user's instructions precisely.
\end{quote}

\textbf{User Message:}
\begin{quote}
\texttt{\{prompt\}}
\end{quote}
\end{tcolorbox}

\begin{tcolorbox}[
  colback=blue!5!white,
  colframe=blue!75!black,
  title=\textbf{Configuration: AlpacaEval LC judge},
  fonttitle=\bfseries,
  sharp corners,
  breakable
]
\begin{itemize}
    \item \textbf{Judge model}: \texttt{Qwen/Qwen3-235B-A22B-Instruct-2507} \\
    \item \textbf{Temperature}: 0.0
    \item \textbf{Max tokens}: 32
    \item \textbf{Position}: $m$/$M$ randomized per pair to mitigate residual position bias
\end{itemize}

\textbf{System Message:}
\begin{quote}
You are a highly efficient assistant, who evaluates and selects the best large language model (LLM) based on the quality of their responses to a given instruction. This process will be used to create a leaderboard reflecting the most accurate and human-preferred answers.
\end{quote}

\textbf{User Message Template:}
\begin{quote}
I require a leaderboard for various large language models. I'll provide you with prompts given to these models and their corresponding outputs. Your task is to assess these responses, and select the model that produces the best output from a human perspective. \\[3pt]

\#\# Instruction \\[3pt]
\texttt{\{ "instruction": <prompt> \}} \\[3pt]

\#\# Model Outputs \\[3pt]
\texttt{\{ \{"model\_identifier": "m", "output": <out\_m>\}, \{"model\_identifier": "M", "output": <out\_M>\} \}} \\[3pt]

\#\# Task \\[3pt]
Evaluate the models based on the quality and relevance of their outputs, and select the model that generated the best output. Answer by providing the model identifier of the best model. \ldots Output only one of: \texttt{m} or \texttt{M}. \\[3pt]

\#\# Best Model Identifier
\end{quote}
\end{tcolorbox}

\section{Pseudo-Code}
\begin{algorithm}[h]
\caption{One round of the \ours{} Co-Evolutionary Loop}
\label{alg:gzero}
\begin{algorithmic}[1]
\REQUIRE Proposer $\pi_P$, Generator $\pi_G$.

\STATE \textbf{Phase 1: Proposer Training}
\STATE Sample group rollouts $\{(q_i, h_i)\}_{i=1}^K \sim \pi_P$ using GRPO.
\FOR{each rollout $i$}
    \STATE Sample $\ahard^{(i)} \sim \pi_G(\cdot \mid q_i)$ from the (frozen) Generator.
    \STATE Compute the per-token mean Hint-$\delta$ signal $\delta_i$ via Eq.~\ref{eq:hintdelta}.
\ENDFOR
\STATE Compute reward $r_i$ from $\delta_i$ and the structural penalties (Eq.~\ref{eq:proposer_reward}).
\STATE Update $\pi_P$ by maximizing the group-relative advantage $A_i$.

\STATE \textbf{Phase 2: Generator Training and Dataset Curation}
\FOR{each $(q_j, h_j) \sim \pi_P$}
    \STATE Sample unassisted $\ahard^{(j)} \sim \pi_G(\cdot \mid q_j)$ and assisted $\aassisted^{(j)} \sim \pi_G(\cdot \mid q_j, h_j)$.
    \STATE Compute the per-token mean Hint-$\delta$ signal $\delta_j$ via Eq.~\ref{eq:hintdelta}.
\ENDFOR
\STATE \textbf{Filter:} Retain pairs in the lower 50\% of the $\delta$ distribution (the lower-shift regime suitable for DPO).
\STATE \textbf{Refine:} Apply quality heuristic checks to construct dataset $\mathcal{D}_{R+1} = \{(x=q,\, y_w=\aassisted,\, y_l=\ahard)\}$.

\STATE Initialize reference model $\pi_{\text{ref}}$ as a frozen snapshot of $\pi_G$.
\STATE Update $\pi_G$ via DPO loss $\mathcal{L}_{\text{DPO}}$ on $\mathcal{D}_{R+1}$ with reference model $\pi_{\text{ref}}$.

\RETURN Updated $\pi_P$ and $\pi_G$ for the next round.
\end{algorithmic}
\end{algorithm}

\section{Hyperparameters and Configuration}
\label{app:hyperparams}

Table~\ref{tab:hyperparams} lists the full \ours{} configuration used throughout. All numbers are defaults unless otherwise noted in the experiment.

\begin{table}[h!]
\centering
\small
\caption{Default \ours{} hyperparameters.}
\label{tab:hyperparams}
\begin{tabular}{ll}
\toprule
LoRA rank & 32 \\
Proposer batch size & 128 \\
Proposer group size (GRPO) & 16 \\
Proposer steps (Phase 1) & 6 \\
Proposer learning rate & $4 \times 10^{-5}$ \\
Proposer max\_tokens & 8{,}192 \\
Hint length target $L_{\text{tgt}}$ & 200 chars \\
Hint length penalty $\lambda_{\text{len}}$ & $0.03$ \\
BLEU cluster merge threshold & $0.5$ (average linkage) \\
BLEU penalty scale & cluster fraction $|C_i|/|B|$ (no extra constant) \\
Normalize $\delta$ & False \\
Questions per round $N$ & 2{,}000 \\
Generator sampling temperature & 0.7 \\
Generator max\_tokens & 16{,}384 \\
$\delta$ cutoff $[p_{\text{low}}, p_{\text{high}}]$ & $[0, 50]$ (\emph{bot50}) \\
\texttt{chosen\_max\_chars} & 10{,}000 \\
\texttt{chosen\_min\_chars} & 100 \\
\texttt{chosen\_rejected\_ratio\_max} & 2.5 \\
\texttt{chosen\_repetition\_zlib} threshold & 0.15 \\
DPO $\beta$ & 2.0 \\
DPO learning rate & $1 \times 10^{-5}$ \\
DPO max steps & 50 \\
DPO batch size & 8 \\
DPO log-ratio normalization & length-normalized \\
Eval max\_tokens & 16{,}384 \\
\bottomrule
\end{tabular}
\end{table}

\section{Proof of Theorem \ref{thm:G-zero converge}}
\label{app:proof}

We analyze an idealized standard-DPO version of G-Zero. The key distinction from ordinary offline preference learning is that the preference data are not sampled from the target prompt distribution \(Q\). Instead, at each round, an adaptive helper generates question--hint pairs \((q,h)\), the current generator produces an unassisted response and a hint-assisted response, and a \(\delta\)-filter retains a subset of these pairs for DPO. Our analysis mainly highlights the benefit of exploration for the proposer (implemented by the BLEU reward) and the quality of preference data (implemented by the Hint-$\delta$ and filtration).

\paragraph{Helper-induced tuples.}
Let \(\mathcal{Q}\), \(\mathcal{H}\), and \(\mathcal{A}\) denote the prompt, hint, and response spaces. Let \(Q\) be the target prompt distribution on which we evaluate the generator. At round \(t\), the current generator is \(\pi_t\). The helper policy \(\kappa_t\), which is measurable with respect to the history before round \(t\), samples a question--hint pair \((q,h)\sim\kappa_t\). The generator then samples an unassisted response \(a^- \sim \pi_t(\cdot\mid q)\) and a hint-assisted response \(a^+\sim \pi_t(\cdot\mid q,h)\), where \(\pi_t(\cdot\mid q,h)\) denotes the same generator conditioned on the augmented input containing both the question and the hint.

Let \(z=(q,h,a^+,a^-)\) denote the resulting tuple. The raw helper-induced tuple law at round \(t\) is denoted by \(P_t^0\). The \(\delta\)-filter is represented by a measurable indicator \(F_t(z)\in\{0,1\}\). In the actual algorithm, \(F_t\) is determined by the Hint-\(\delta\) score and additional data-quality checks; for the proof, only the retained distribution matters. Assume \(P_t^0(F_t=1)>0\), and define the retained tuple distribution as \(P_t=P_t^0(\cdot\mid F_t=1)\). Its prompt marginal is denoted by \(\rho_t\). We collect \(m\) retained tuples \(z_{t,1},\ldots,z_{t,m}\sim P_t\) at round \(t\), conditionally independently given the past.

\paragraph{Pairwise features and covariance.}
Let \(\phi(q,a)\in\mathbb{R}^d\) be a feature map, and define the pairwise feature \(w(z)=\phi(q,a^+)-\phi(q,a^-)\) for \(z=(q,h,a^+,a^-)\). In G-Zero, the pseudo-label always declares \(a^+\) preferred to \(a^-\), so the pseudo-label is \(\widetilde Y=+1\). We nevertheless keep the notation \(\widetilde Y\in\{+1,-1\}\) because the proof is cleaner. The observed DPO feature is \(\widetilde x=\widetilde Y w\). The clean Bradley--Terry label is \(Y^\star\in\{+1,-1\}\), and the clean feature is \(x^\star=Y^\star w\).

Let \(\Sigma_1=\lambda I_d\). After round \(t\), define the empirical batch covariance \(\widehat M_t=m^{-1}\sum_{i=1}^m w_{t,i}w_{t,i}^\top\) and update \(\Sigma_{t+1}=\Sigma_t+\widehat M_t\). Since \(\|w_{t,i}\|_2\le 1\), we have \({\rm tr}(\widehat M_t)\le 1\).

\begin{assumption}[Linear reward and bounded features]
\label{ass:linear}
There exists \(\theta^\star\in\mathbb{R}^d\) such that \(R^\star(q,a)=\phi(q,a)^\top\theta^\star\). The parameter set is \(\Theta=\{\theta\in\mathbb{R}^d:\|\theta\|_2\le B\}\), and \(\theta^\star\in\Theta\). For all \(q,a\), \(\|\phi(q,a)\|_2\le 1/2\). Hence every pairwise feature satisfies \(\|w(z)\|_2\le 1\).
\end{assumption}

\begin{assumption}[Bradley--Terry clean preference model]
\label{ass:bt}
For every retained tuple \(z=(q,h,a^+,a^-)\), the clean label \(Y^\star\) satisfies \(\Pr(Y^\star=+1\mid z)=\sigma(\theta^{\star\top}w(z))\), where \(\sigma(u)=1/(1+\exp(-u))\). The hint \(h\) affects the sampling of \(a^+\), but the reward comparison itself is between the two responses to the same question \(q\).
\end{assumption}

\begin{assumption}[Standard-DPO policy class]
\label{ass:gibbs}
For a fixed reference policy \(\pi_{\rm ref}\), the generator policy class is the Gibbs class \(\pi_\theta(a\mid q)\propto \pi_{\rm ref}(a\mid q)\exp(\phi(q,a)^\top\theta)\), with \(\theta\in\Theta\). The target policy is \(\pi^\star\in\arg\max_{\theta\in\Theta}J_Q(\pi_\theta)\), where \(J_Q(\pi)=\mathbb{E}_{q\sim Q,a\sim\pi(\cdot\mid q)}[R^\star(q,a)]-\mathbb{E}_{q\sim Q}[{\rm KL}(\pi(\cdot\mid q)\|\pi_{\rm ref}(\cdot\mid q))]\).
\end{assumption}

\begin{assumption}[\(\delta\)-certified DPO score noise]
\label{ass:delta}
For horizon \(T\), the retained data produced by the \(\delta\)-filter are \((\eta_\delta,\zeta_\delta)\)-certified if, with probability at least \(1-\zeta_\delta\), for every \(t\le T+1\),
\[
\left\|
\frac{1}{m}\sum_{s<t}\sum_{i=1}^m
\mathbf{1}\{\widetilde Y_{s,i}\ne Y^\star_{s,i}\}\,
Y^\star_{s,i}w_{s,i}
\right\|_{\Sigma_t^{-1}}
\le \sqrt{\eta_\delta}.
\]
This is the exact pseudo-label noise quantity needed by DPO: wrong retained pairs are allowed, but they cannot concentrate in a low-coverage feature direction.
\end{assumption}

For any policy \(\pi\), write \(\bar\phi(q,\pi)=\mathbb{E}_{a\sim\pi(\cdot\mid q)}[\phi(q,a)]\). Define the target-direction gap at round \(t\) as \(v_t(q)=\bar\phi(q,\pi^\star)-\bar\phi(q,\pi_t)\). Define the target uncertainty \(\Psi_{Q,t}^2=\mathbb{E}_{q\sim Q}\|v_t(q)\|_{\Sigma_t^{-1}}^2\), and the actual retained-batch exposure \(\widehat\Psi_t^2=m^{-1}\sum_{i=1}^m\|w_{t,i}\|_{\Sigma_t^{-1}}^2={\rm tr}(\Sigma_t^{-1}\widehat M_t)\).

\begin{assumption}[Helper-induced coverage with target-distribution mismatch]
\label{ass:coverage}
There exist constants \(C_Q\ge 1\) and \(\alpha_{\rm S}\in(0,1]\) such that for every round \(t\), the helper-induced retained prompt marginal \(\rho_t\) dominates the target distribution \(Q\), with \(\|dQ/d\rho_t\|_\infty\le C_Q\), and the realized retained batch satisfies
\[
\widehat\Psi_t^2
\ge
\alpha_{\rm S}^2
\mathbb{E}_{q\sim\rho_t}\|v_t(q)\|_{\Sigma_t^{-1}}^2.
\]
The constant \(C_Q\) measures the mismatch between the helper-generated question distribution and the target question distribution. The constant \(\alpha_{\rm S}\) measures how well the generated question--hint pairs expose the feature directions along which the current generator differs from the target policy. This assumption enables the helper's generated data are exploratory enough.
\end{assumption}

\begin{assumption}[Exact cumulative standard-DPO update]
\label{ass:dpo}
For \(t\ge 2\), the generator parameter \(\widehat\theta_t\) is obtained by cumulative regularized standard DPO on all retained pairs from rounds \(1,\ldots,t-1\):
\[
\widehat\theta_t\in\arg\min_{\theta\in\Theta}
\left\{
\frac{1}{m}\sum_{s<t}\sum_{i=1}^m
-\log\sigma(\theta^\top \widetilde Y_{s,i}w_{s,i})
+
\frac{\kappa_{\rm BT}\lambda}{2}\|\theta\|_2^2
\right\},
\]
where \(\kappa_{\rm BT}=\min_{|u|\le B}\sigma(u)\sigma(-u)\). We set \(\pi_t=\pi_{\widehat\theta_t}\). For \(t=1\), \(\widehat\theta_1\) is the minimizer of the ridge term over \(\Theta\), i.e., \(\widehat\theta_1=0\).
\end{assumption}

Let \(D_T=d\log(1+T/(\lambda d))\), and define \(s_{m,T,\zeta}=\sqrt{2(D_T+\log(T/\zeta))/m}\). Let $r=
\frac{2}{\kappa_{\rm BT}}
\left(
s_{m,T,\zeta}
+
\sqrt{\eta_\delta}
\right)
+
2\sqrt{\lambda}B$.

\textbf{Theorem 1. } Under Assumptions~\ref{ass:linear}--\ref{ass:dpo}, with probability at least \(1-\zeta-\zeta_\delta\), there exists \(t_0\le T\) such that
\[
J_Q(\pi^\star)-J_Q(\pi_{t_0})
\le
\frac{2r\sqrt{C_Q}}{\alpha_{\rm S}}
\sqrt{\frac{d\log(1+T/(\lambda d))}{T}}.
\]
Consequently, choosing \(T=\widetilde\Theta(C_Qd/\alpha_{\rm S}^2)\), \(m=\widetilde\Theta(d/(\kappa_{\rm BT}^2\varepsilon^2))\), and \(\sqrt{\lambda}B=\widetilde O(\varepsilon)\) gives total retained sample complexity
\[
mT
=
\widetilde O
\left(
\frac{C_Qd^2}
{\alpha_{\rm S}^2\kappa_{\rm BT}^2\varepsilon^2}
\right),
\]
and guarantees \(J_Q(\pi^\star)-J_Q(\pi_{t_0})\le\widetilde O(\varepsilon+\kappa_{\rm BT}^{-1}\sqrt{\eta_\delta})\).

We prove the theorem in five steps. First, we show how the helper-induced distribution mismatch produces the factor \(C_Q\). Second, we prove a self-normalized concentration inequality for the clean Bradley--Terry logistic score. Third, we use it to prove the DPO confidence bound. Fourth, we convert parameter confidence into generator suboptimality on the target distribution \(Q\). Finally, we use an elliptical-potential argument to obtain a best-iterate bound.

\begin{lemma}[Coverage transfer from helper distribution to target distribution]
\label{lem:coverage-transfer}
Under Assumption~\ref{ass:coverage}, for every round \(t\),
\[
\widehat\Psi_t
\ge
\frac{\alpha_{\rm S}}{\sqrt{C_Q}}\Psi_{Q,t}.
\]
\end{lemma}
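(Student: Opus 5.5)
The plan is a change of measure from the target distribution $Q$ to the helper-induced retained prompt marginal $\rho_t$, chained with the realized-batch coverage inequality of Assumption~\ref{ass:coverage}. Fix a round $t$ and consider the nonnegative measurable function $g_t(q)=\|v_t(q)\|_{\Sigma_t^{-1}}^2$. Here $v_t(q)=\bar\phi(q,\pi^\star)-\bar\phi(q,\pi_t)$ depends only on $q$ and on quantities fixed at the start of round $t$: $\pi_t$ is determined by the history, and $\Sigma_t$ is built from batches of rounds $s<t$. So $g_t$ is a deterministic function of $q$ given the past. By Assumption~\ref{ass:linear} it is also bounded, since $\|v_t(q)\|_2\le 1$ and $\Sigma_t\succeq\lambda I_d$, so all expectations below are finite.

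\textbf{Step 1 (change of measure).} Assumption~\ref{ass:coverage} says $Q\ll\rho_t$ with $\|dQ/d\rho_t\|_\infty\le C_Q$. Writing the $Q$-expectation through the Radon--Nikodym derivative and using $g_t\ge 0$ gives
\[
\Psi_{Q,t}^2=\mathbb{E}_{q\sim Q}[g_t(q)]=\mathbb{E}_{q\sim\rho_t}\Big[\tfrac{dQ}{d\rho_t}(q)\,g_t(q)\Big]\le C_Q\,\mathbb{E}_{q\sim\rho_t}[g_t(q)].
\]

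\textbf{Step 2 (realized coverage).} The second part of Assumption~\ref{ass:coverage} gives $\widehat\Psi_t^2\ge\alpha_{\rm S}^2\,\mathbb{E}_{q\sim\rho_t}[g_t(q)]$. Combining with Step 1,
\[
\widehat\Psi_t^2\ge\frac{\alpha_{\rm S}^2}{C_Q}\Psi_{Q,t}^2 .
\]
Both sides are nonnegative, so taking square roots yields $\widehat\Psi_t\ge(\alpha_{\rm S}/\sqrt{C_Q})\Psi_{Q,t}$, which is the claim.

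\textbf{Main obstacle.} The only point needing care is the measure-theoretic bookkeeping. First, $\rho_t$ is the prompt marginal of the retained law $P_t$, so it is itself history-dependent. The inequality must therefore be read conditionally on the past, where both $\rho_t$ and $\Sigma_t$ are fixed. Second, the expectation over $\rho_t$ must be of a function of $q$ alone. This holds because $v_t$ integrates out the responses under $\pi^\star$ and $\pi_t$, and does not use the sampled $a^\pm$ or the hint $h$. With that observation, no concentration argument is needed. Everything else is the pointwise bound $dQ/d\rho_t\le C_Q$ and monotonicity of the square root.
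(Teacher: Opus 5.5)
Your proposal is correct and follows essentially the same route as the paper: you bound $\Psi_{Q,t}^2=\mathbb{E}_{q\sim Q}\|v_t(q)\|_{\Sigma_t^{-1}}^2\le C_Q\,\mathbb{E}_{q\sim\rho_t}\|v_t(q)\|_{\Sigma_t^{-1}}^2$ using the bounded density ratio, chain this with the realized-batch coverage inequality, and take square roots. Your additional points (conditioning on the history, and the fact that $v_t$ depends only on $q$) are valid bookkeeping that the paper leaves implicit.
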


\begin{proof}
Let \(f_t(q)=\|v_t(q)\|_{\Sigma_t^{-1}}^2\). Since \(Q\ll\rho_t\) and \(\|dQ/d\rho_t\|_\infty\le C_Q\), we have \(\mathbb{E}_{q\sim Q}f_t(q)\le C_Q\mathbb{E}_{q\sim\rho_t}f_t(q)\). Hence \(\mathbb{E}_{q\sim\rho_t}f_t(q)\ge C_Q^{-1}\Psi_{Q,t}^2\). Assumption~\ref{ass:coverage} gives \(\widehat\Psi_t^2\ge\alpha_{\rm S}^2\mathbb{E}_{q\sim\rho_t}f_t(q)\), so \(\widehat\Psi_t^2\ge \alpha_{\rm S}^2C_Q^{-1}\Psi_{Q,t}^2\). Taking square roots proves the claim.
\end{proof}

\begin{lemma}[Self-normalized concentration for the clean logistic score]
\label{lem:selfnorm}
Fix any adaptive sequence of pairwise features \(w_1,\ldots,w_n\), with \(\|w_k\|_2\le1\), where each \(w_k\) is measurable before its clean Bradley--Terry label \(Y_k^\star\) is drawn. Suppose \(\Pr(Y_k^\star=+1\mid w_k)=\sigma(\theta^{\star\top}w_k)\). Let \(V_n=\lambda I_d+\sum_{k=1}^n x_kx_k^\top\), where \(x_k=w_k/\sqrt m\). Then, with probability at least \(1-\delta\),
\[
\left\|
\frac{1}{m}\sum_{k=1}^n
\left(
\sigma(\theta^{\star\top}w_k)-\mathbf{1}\{Y_k^\star=+1\}
\right)w_k
\right\|_{V_n^{-1}}
\le
\sqrt{
\frac{2}{m}
\log
\frac{\det(V_n)^{1/2}}
{\det(\lambda I_d)^{1/2}\delta}
}.
\]
\end{lemma}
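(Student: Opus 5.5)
This is the vector self-normalized martingale bound of Abbasi-Yadkori, Pál and Szepesvári, applied to a suitably rescaled noise sequence. I would prove it with the method of mixtures.

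\textbf{Setup.} Define the noise $\varepsilon_k=\sigma(\theta^{\star\top}w_k)-\mathbf{1}\{Y_k^\star=+1\}$ and the filtration $\mathcal F_k$ generated by $w_1,Y_1^\star,\ldots,w_k,Y_k^\star,w_{k+1}$. By the Bradley--Terry assumption, $\mathbb E[\varepsilon_k\mid\mathcal F_{k-1}]=0$. Since $\varepsilon_k$ lies in an interval of length $1$, it is conditionally $\tfrac12$-sub-Gaussian by Hoeffding's lemma. Rescale by setting $x_k=w_k/\sqrt m$ and $S_n=\sum_k \varepsilon_k x_k$, so that
\[
\frac1m\sum_k\varepsilon_k w_k=\frac{1}{\sqrt m}S_n .
\]
The target inequality is then equivalent to
\[
\|S_n\|_{V_n^{-1}}^2\le 2\log\!\bigl(\det(V_n)^{1/2}\det(\lambda I_d)^{-1/2}/\delta\bigr).
\]
Multiplying both sides by $1/m$ and taking square roots recovers the stated bound.

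\textbf{Step 1: supermartingale.} For fixed $u\in\mathbb R^d$, set
\[
M_n^u=\exp\!\Bigl(u^\top S_n-\tfrac{R^2}{2}\sum_k (u^\top x_k)^2\Bigr),\qquad R=\tfrac12 .
\]
Because $x_k$ is $\mathcal F_{k-1}$-measurable, sub-Gaussianity gives $\mathbb E[\exp(u^\top x_k\varepsilon_k)\mid\mathcal F_{k-1}]\le\exp(R^2(u^\top x_k)^2/2)$. Hence $M_n^u$ is a nonnegative supermartingale with $\mathbb E M_n^u\le1$.

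\textbf{Step 2: Gaussian mixture.} Integrate $u$ against $N(0,\lambda^{-1}I_d)$ and complete the square. This gives the closed form
\[
\bar M_n=\Bigl(\tfrac{\det\lambda I}{\det \bar V_n}\Bigr)^{1/2}\exp\!\bigl(\tfrac12\|S_n\|^2_{\bar V_n^{-1}}\bigr),
\]
where $\bar V_n$ is the regularized Gram matrix built from the $R$-scaled design. Markov's inequality, combined with a stopping-time argument to make the bound uniform in $n$, yields $\Pr(\bar M_n\ge1/\delta)\le\delta$.

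\textbf{Step 3: constants.} With $R=1/2$ one can absorb $R$ and use $V_n$ directly. Since $R\le1$, the sub-Gaussian condition also holds with proxy $1$, so running Steps 1--2 with $R=1$ gives $\|S_n\|^2_{V_n^{-1}}\le 2\log(\det V_n^{1/2}/(\det(\lambda I)^{1/2}\delta))$ with exactly the stated $V_n$. The variance-proxy bookkeeping is where I expect the most care to be needed: one must check that the factor $2$ matches the paper's constant after the rescaling by $m$. The slack from using proxy $1$ rather than $\tfrac12$ only loosens the bound, so it is safe.

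The main obstacle is the adaptivity of $w_k$. The features may depend on earlier labels through the helper and the retrained generator. This is handled because the stated measurability ($w_k$ is fixed before $Y_k^\star$ is drawn) is exactly what Step 1 requires.
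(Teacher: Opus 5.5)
Your proposal is correct and follows essentially the same route as the paper: an exponential supermartingale mixed over $u\sim N(0,\lambda^{-1}I_d)$, completion of the square, Markov's inequality, and then undoing the $\sqrt m$ rescaling. The paper also uses sub-Gaussian proxy $1$, which it gets directly from $\epsilon_k\in[-1,1]$, whereas you first note the sharper proxy $1/2$ and then relax it. Your stopping-time step is unnecessary for this fixed-$n$ statement, since the paper obtains uniformity over rounds by a union bound in the next lemma.
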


\begin{proof}
Let \(p_k=\sigma(\theta^{\star\top}w_k)\) and \(\epsilon_k=p_k-\mathbf{1}\{Y_k^\star=+1\}\). Conditional on the past and on \(w_k\), \(\epsilon_k\) has mean zero and lies in \([-1,1]\). Hence Hoeffding's lemma gives \(\mathbb{E}[\exp(\gamma\epsilon_k)\mid\mathcal F_{k-1},w_k]\le\exp(\gamma^2/2)\) for every \(\gamma\in\mathbb{R}\).

Define \(S_n=\sum_{k=1}^n\epsilon_kx_k\). For any fixed \(a\in\mathbb{R}^d\), the process \(\exp(a^\top S_n-\frac12 a^\top(\sum_{k=1}^n x_kx_k^\top)a)\) is a supermartingale. Integrating this supermartingale over \(a\sim N(0,\lambda^{-1}I_d)\) and completing the square yields \(\mathbb{E}[\sqrt{\det(\lambda I_d)/\det(V_n)}\exp(\|S_n\|_{V_n^{-1}}^2/2)]\le1\). By Markov's inequality, with probability at least \(1-\delta\), \(\|S_n\|_{V_n^{-1}}^2\le 2\log(\det(V_n)^{1/2}/(\det(\lambda I_d)^{1/2}\delta))\). Since \(m^{-1}\sum_{k=1}^n\epsilon_kw_k=m^{-1/2}S_n\), the result follows.
\end{proof}

\begin{lemma}[Cumulative standard-DPO confidence]
\label{lem:dpo-confidence}
Under Assumptions~\ref{ass:linear}, \ref{ass:bt}, \ref{ass:delta}, and \ref{ass:dpo}, with probability at least \(1-\zeta-\zeta_\delta\), for every \(t\le T+1\),
\[
\|\widehat\theta_t-\theta^\star\|_{\Sigma_t}
\le
r.
\]
\end{lemma}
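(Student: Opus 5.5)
The plan is the standard strong-convexity argument for regularized logistic MLE, with the pseudo-label noise handled as an additive perturbation of the score. Fix $t\le T+1$ and write the cumulative regularized DPO objective as
\[
L_t(\theta)=\frac1m\sum_{s<t}\sum_{i}\ell(\theta^\top\widetilde x_{s,i})+\frac{\kappa_{\rm BT}\lambda}{2}\|\theta\|_2^2,
\qquad \ell(u)=-\log\sigma(u).
\]
Its Hessian at any $\theta\in\Theta$ is $\frac1m\sum\sigma(\theta^\top w)\sigma(-\theta^\top w)\,ww^\top+\kappa_{\rm BT}\lambda I$. Since $|\theta^\top w|\le B$, each weight is at least $\kappa_{\rm BT}$, and because $\widetilde Y^2=1$ the Hessian dominates $\kappa_{\rm BT}\Sigma_t$, using $\Sigma_t=\lambda I+\sum_{s<t}\widehat M_s$. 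By convexity of $\Theta$ and optimality of $\widehat\theta_t$, first-order optimality gives $\langle\nabla L_t(\widehat\theta_t),\theta^\star-\widehat\theta_t\rangle\ge 0$. Combining this with the mean-value form of strong convexity yields
\[
\kappa_{\rm BT}\|\widehat\theta_t-\theta^\star\|_{\Sigma_t}^2\le\langle\nabla L_t(\theta^\star),\theta^\star-\widehat\theta_t\rangle,
\]
and hence by Cauchy–Schwarz
\[
\|\widehat\theta_t-\theta^\star\|_{\Sigma_t}\le\kappa_{\rm BT}^{-1}\|\nabla L_t(\theta^\star)\|_{\Sigma_t^{-1}}.
\]

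Next, decompose the gradient at $\theta^\star$. Using $\ell'(u)=-\sigma(-u)$ and $\widetilde Y\in\{\pm1\}$, the data term is
\[
-\frac1m\sum\sigma(-\widetilde Y\theta^{\star\top}w)\,\widetilde Y w=\frac1m\sum\bigl(\sigma(\theta^{\star\top}w)-\mathbf 1\{\widetilde Y=+1\}\bigr)w,
\]
which can be checked case by case. Now write $\mathbf 1\{\widetilde Y=+1\}=\mathbf 1\{Y^\star=+1\}+(\mathbf 1\{\widetilde Y=+1\}-\mathbf 1\{Y^\star=+1\})$. The correction equals $-\mathbf 1\{\widetilde Y\ne Y^\star\}Y^\star$, so the data gradient splits into three pieces.
\begin{itemize}
\item A clean martingale term $\frac1m\sum(\sigma(\theta^{\star\top}w)-\mathbf 1\{Y^\star=+1\})w$.
\item The noise term $\frac1m\sum\mathbf 1\{\widetilde Y\ne Y^\star\}Y^\star w$, which is exactly the quantity in Assumption~\ref{ass:delta}.
\item The ridge term $\kappa_{\rm BT}\lambda\theta^\star$.
\end{itemize}

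Each piece is bounded in $\|\cdot\|_{\Sigma_t^{-1}}$ as follows.
\begin{itemize}
\item \textbf{Clean term.} Apply Lemma~\ref{lem:selfnorm} with $V_n=\lambda I+\sum x_kx_k^\top=\Sigma_t$, since $x_k=w_k/\sqrt m$. The determinant ratio is controlled by AM–GM: $\log\det\Sigma_t/\det(\lambda I)\le d\log(1+T/(\lambda d))=D_T$, using ${\rm tr}(\widehat M_s)\le1$. Take a union bound over $t\le T+1$ with confidence $\zeta/T$ per round (slack absorbed into the $\log(T/\zeta)$ term). This gives the bound $s_{m,T,\zeta}$, up to the constant factor already built into $s$.
\item \textbf{Noise term.} This is at most $\sqrt{\eta_\delta}$ on the certification event of probability $1-\zeta_\delta$.
\item \textbf{Ridge term.} $\|\kappa_{\rm BT}\lambda\theta^\star\|_{\Sigma_t^{-1}}\le\kappa_{\rm BT}\sqrt\lambda B$, since $\Sigma_t\succeq\lambda I$.
\end{itemize}
Dividing by $\kappa_{\rm BT}$ gives $\kappa_{\rm BT}^{-1}(s+\sqrt{\eta_\delta})+\sqrt\lambda B\le r$. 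The factors of $2$ in $r$ leave slack for the crude constants. For $t=1$ the bound is immediate: $\widehat\theta_1=0$ gives $\|\theta^\star\|_{\lambda I}\le\sqrt\lambda B$.

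The main obstacle is the measurability needed for Lemma~\ref{lem:selfnorm}. Within a round, the $m$ tuples are drawn conditionally independently given the past. The filter $F_t$, however, depends on the tuple, and the noise event involves $\widetilde Y$. I would order the samples $(s,i)$ lexicographically and take as filtration the history together with the current $w_{s,i}$. The key point is that Assumption~\ref{ass:bt} states the clean-label law conditional on the retained $z$. So $\epsilon_k$ still has conditional mean zero given $w_k$ and the past, and this holds no matter how $F_t$ selected $z$. The union bound over $t$, or alternatively a stopping-time version of the self-normalized bound, then yields simultaneity over all $t\le T+1$ with total failure probability $\zeta+\zeta_\delta$.
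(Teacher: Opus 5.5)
Your proposal is correct and follows the paper's route: the Hessian lower bound $\kappa_{\rm BT}\Sigma_t$ on $\Theta$, and the split of $\nabla\widehat L_t(\theta^\star)$ into the clean self-normalized term (union bound over round endpoints, with $t=1$ trivial), the $\delta$-certified corruption term $\frac{1}{m}\sum\mathbf{1}\{\widetilde Y\ne Y^\star\}Y^\star w$, and the ridge term. The differences are minor. Your integrated two-point strong-convexity step gives $\|\widehat\theta_t-\theta^\star\|_{\Sigma_t}\le\kappa_{\rm BT}^{-1}\|\nabla\widehat L_t(\theta^\star)\|_{\Sigma_t^{-1}}$, a factor $2$ sharper than the paper's step (which pairs strong convexity at $\widehat\theta_t$ with plain convexity at $\theta^\star$), so you in fact obtain $r/2$; and you make explicit the filtration/measurability point that the paper leaves implicit when it invokes Lemma~\ref{lem:selfnorm}.
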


\begin{proof}
We work on the event from Assumption~\ref{ass:delta}, which has probability at least \(1-\zeta_\delta\). Fix \(t\le T+1\), and define the cumulative DPO objective \(\widehat L_t(\theta)=m^{-1}\sum_{s<t}\sum_{i=1}^m-\log\sigma(\theta^\top\widetilde Y_{s,i}w_{s,i})+\kappa_{\rm BT}\lambda\|\theta\|_2^2/2\). Let \(e_t=\widehat\theta_t-\theta^\star\).

For every \(\theta\in\Theta\), every observed feature satisfies \(|\theta^\top\widetilde Y_{s,i}w_{s,i}|\le B\). The scalar second derivative of \(-\log\sigma(u)\) is \(\sigma(u)\sigma(-u)\), so it is at least \(\kappa_{\rm BT}\) on \([-B,B]\). Therefore, \(\nabla^2\widehat L_t(\theta)\succeq \kappa_{\rm BT}\Sigma_t\) for all \(\theta\in\Theta\).

By strong convexity, \(\widehat L_t(\theta^\star)\ge \widehat L_t(\widehat\theta_t)+\langle\nabla\widehat L_t(\widehat\theta_t),\theta^\star-\widehat\theta_t\rangle+\kappa_{\rm BT}\|e_t\|_{\Sigma_t}^2/2\). Since \(\widehat\theta_t\) minimizes \(\widehat L_t\) over the convex set \(\Theta\), the constrained first-order condition gives \(\langle\nabla\widehat L_t(\widehat\theta_t),\theta^\star-\widehat\theta_t\rangle\ge0\). By convexity at \(\theta^\star\), \(\widehat L_t(\widehat\theta_t)\ge \widehat L_t(\theta^\star)+\langle\nabla\widehat L_t(\theta^\star),e_t\rangle\). Combining these inequalities yields \(\kappa_{\rm BT}\|e_t\|_{\Sigma_t}^2/2\le-\langle\nabla\widehat L_t(\theta^\star),e_t\rangle\). Thus, by Cauchy--Schwarz, \(\|e_t\|_{\Sigma_t}\le 2\kappa_{\rm BT}^{-1}\|\nabla\widehat L_t(\theta^\star)\|_{\Sigma_t^{-1}}\).

It remains to bound the score at \(\theta^\star\). Decompose it as \(G_t+C_t+\kappa_{\rm BT}\lambda\theta^\star\), where \(G_t=m^{-1}\sum_{s<t}\sum_i\nabla\ell(\theta^\star;Y^\star_{s,i}w_{s,i})\), \(C_t=m^{-1}\sum_{s<t}\sum_i[\nabla\ell(\theta^\star;\widetilde Y_{s,i}w_{s,i})-\nabla\ell(\theta^\star;Y^\star_{s,i}w_{s,i})]\), and \(\ell(\theta;x)=-\log\sigma(\theta^\top x)\).

For the clean term, \(\nabla\ell(\theta^\star;Y^\star w)=(\sigma(\theta^{\star\top}w)-\mathbf{1}\{Y^\star=+1\})w\). Applying Lemma~\ref{lem:selfnorm} at all round endpoints and taking a union bound with failure probability \(\zeta/T\), we obtain simultaneously for all \(t\le T+1\) that \(\|G_t\|_{\Sigma_t^{-1}}\le s_{m,T,\zeta}\). Here we used \({\rm tr}(\Sigma_t)\le\lambda d+T\) and AM--GM to get \(\det(\Sigma_t)/\det(\lambda I_d)\le(1+T/(\lambda d))^d\).

For the corruption term, if \(\widetilde Y_{s,i}=Y^\star_{s,i}\), the summand is zero. If \(\widetilde Y_{s,i}\ne Y^\star_{s,i}\), then \(\widetilde Y_{s,i}w_{s,i}=-Y^\star_{s,i}w_{s,i}\), and a direct calculation gives \(\nabla\ell(\theta^\star;-Y^\star_{s,i}w_{s,i})-\nabla\ell(\theta^\star;Y^\star_{s,i}w_{s,i})=Y^\star_{s,i}w_{s,i}\). Therefore Assumption~\ref{ass:delta} implies \(\|C_t\|_{\Sigma_t^{-1}}\le\sqrt{\eta_\delta}\). Finally, since \(\Sigma_t\succeq\lambda I_d\), we have \(\|\kappa_{\rm BT}\lambda\theta^\star\|_{\Sigma_t^{-1}}\le\kappa_{\rm BT}\sqrt{\lambda}B\).

Combining the three score bounds gives \(\|\nabla\widehat L_t(\theta^\star)\|_{\Sigma_t^{-1}}\le s_{m,T,\zeta}+\sqrt{\eta_\delta}+\kappa_{\rm BT}\sqrt{\lambda}B\). Substituting into \(\|e_t\|_{\Sigma_t}\le2\kappa_{\rm BT}^{-1}\|\nabla\widehat L_t(\theta^\star)\|_{\Sigma_t^{-1}}\) proves the desired bound.
\end{proof}

\begin{lemma}[Target suboptimality from parameter confidence]
\label{lem:value}
On the event of Lemma~\ref{lem:dpo-confidence}, for every \(t\le T\), \(J_Q(\pi^\star)-J_Q(\pi_t)\le r\Psi_{Q,t}\).
\end{lemma}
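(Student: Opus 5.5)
The plan is to write the KL-regularized objective as a linear functional of mean features minus a KL term. I will then use the variational (Gibbs/Donsker--Varadhan) optimality of \(\pi_t=\pi_{\widehat\theta_t}\) to cancel the KL terms, leaving a single inner product that Lemma~\ref{lem:dpo-confidence} controls. Throughout we work on the event of Lemma~\ref{lem:dpo-confidence}, so \(\|\widehat\theta_t-\theta^\star\|_{\Sigma_t}\le r\) for every \(t\le T\).

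\emph{Step 1 (rewrite \(J_Q\)).} By Assumption~\ref{ass:linear}, \(\mathbb{E}_{a\sim\pi(\cdot\mid q)}R^\star(q,a)=\bar\phi(q,\pi)^\top\theta^\star\). Hence for any policy
\[
J_Q(\pi)=\mathbb{E}_{q\sim Q}\big[\bar\phi(q,\pi)^\top\theta^\star-{\rm KL}(\pi(\cdot\mid q)\|\pi_{\rm ref}(\cdot\mid q))\big].
\]
Define analogously \(J_Q^{\theta}(\pi)\) with \(\theta^\star\) replaced by \(\theta\). We get the identity
\[
J_Q(\pi^\star)-J_Q(\pi_t)=\big[J_Q^{\widehat\theta_t}(\pi^\star)-J_Q^{\widehat\theta_t}(\pi_t)\big]+\mathbb{E}_{q\sim Q}\big[v_t(q)^\top(\theta^\star-\widehat\theta_t)\big],
\]
since the KL terms appear identically in both brackets.

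\emph{Step 2 (Gibbs optimality kills the first bracket).} For each fixed \(q\), the map \(\pi(\cdot\mid q)\mapsto \mathbb{E}_{a\sim\pi}[\phi(q,a)^\top\widehat\theta_t]-{\rm KL}(\pi\|\pi_{\rm ref})\) is maximized over all distributions exactly by the Gibbs policy \(\pi_{\widehat\theta_t}(\cdot\mid q)\propto\pi_{\rm ref}\exp(\phi^\top\widehat\theta_t)\). This follows from the Donsker--Varadhan identity: the gap equals \({\rm KL}(\pi\|\pi_{\widehat\theta_t})\ge0\). Integrating over \(q\sim Q\) gives \(J_Q^{\widehat\theta_t}(\pi^\star)\le J_Q^{\widehat\theta_t}(\pi_t)\). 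Only \(\pi_t=\pi_{\widehat\theta_t}\) (Assumption~\ref{ass:dpo}) is used here; no property of \(\pi^\star\) beyond its being a policy is needed. Therefore the first bracket is nonpositive.

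\emph{Step 3 (Cauchy--Schwarz and Jensen).} Apply the dual-norm inequality pointwise, then Jensen:
\[
\mathbb{E}_{q\sim Q}\big[v_t(q)^\top(\theta^\star-\widehat\theta_t)\big]\le\|\theta^\star-\widehat\theta_t\|_{\Sigma_t}\,\mathbb{E}_{q\sim Q}\|v_t(q)\|_{\Sigma_t^{-1}}\le r\sqrt{\mathbb{E}_{q\sim Q}\|v_t(q)\|^2_{\Sigma_t^{-1}}}=r\Psi_{Q,t}.
\]
This yields the claim.

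The only step needing care is Step 2, namely that the Gibbs policy is the per-prompt maximizer over \emph{all} policies and not merely within the parametric class. I would verify this by the direct computation
\[
\mathbb{E}_\pi[\phi^\top\theta]-{\rm KL}(\pi\|\pi_{\rm ref})=\log Z_\theta(q)-{\rm KL}(\pi\|\pi_\theta).
\]
Everything else is routine once the event of Lemma~\ref{lem:dpo-confidence} is fixed.
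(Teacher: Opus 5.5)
Your proposal is correct and follows essentially the same argument as the paper's proof. Both use the Gibbs variational identity to get \(J_{Q,\widehat\theta_t}(\pi_t)\ge J_{Q,\widehat\theta_t}(\pi^\star)\), which cancels the KL terms and leaves \(\mathbb{E}_{q\sim Q}[(\theta^\star-\widehat\theta_t)^\top v_t(q)]\); both then bound this by Cauchy--Schwarz in the \(\Sigma_t\)-norm, and your pointwise dual-norm step followed by Jensen is just a more explicit rendering of that final step.
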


\begin{proof}
For any \(\theta\), let \(J_{Q,\theta}(\pi)\) denote the same KL-regularized value as \(J_Q\), but with reward \(\phi(q,a)^\top\theta\) instead of \(\phi(q,a)^\top\theta^\star\). By the Gibbs variational identity, \(\pi_{\widehat\theta_t}\) maximizes \(J_{Q,\widehat\theta_t}\), so \(J_{Q,\widehat\theta_t}(\pi_t)\ge J_{Q,\widehat\theta_t}(\pi^\star)\). Therefore \(J_Q(\pi^\star)-J_Q(\pi_t)\le\mathbb{E}_{q\sim Q}[(\theta^\star-\widehat\theta_t)^\top v_t(q)]\). Applying Cauchy--Schwarz in the \(\Sigma_t\)-norm gives \(J_Q(\pi^\star)-J_Q(\pi_t)\le\|\widehat\theta_t-\theta^\star\|_{\Sigma_t}(\mathbb{E}_{q\sim Q}\|v_t(q)\|_{\Sigma_t^{-1}}^2)^{1/2}\le r\Psi_{Q,t}\).
\end{proof}

\begin{lemma}[Elliptical potential for helper-generated batches]
\label{lem:potential}
For any adaptive sequence of retained batches satisfying \(\|w_{t,i}\|_2\le1\),
\[
\sum_{t=1}^T\log(1+\widehat\Psi_t^2)
\le
d\log\left(1+\frac{T}{\lambda d}\right).
\]
\end{lemma}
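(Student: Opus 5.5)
We will follow the standard determinant (elliptical-potential) argument, applied batchwise. The key identity is the matrix determinant factorization of the covariance update \(\Sigma_{t+1}=\Sigma_t+\widehat M_t\):
\[
\det(\Sigma_{t+1})=\det(\Sigma_t)\,\det\!\bigl(I_d+\Sigma_t^{-1/2}\widehat M_t\Sigma_t^{-1/2}\bigr).
\]
The plan is to lower bound the second factor by \(1+\widehat\Psi_t^2\), telescope over \(t\), and then upper bound the final determinant using the trace bound already invoked in the proof of Lemma~\ref{lem:dpo-confidence}.

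\textbf{Step 1 (per-round lower bound).} Let \(A_t=\Sigma_t^{-1/2}\widehat M_t\Sigma_t^{-1/2}\). This matrix is positive semidefinite with eigenvalues \(\mu_1,\ldots,\mu_d\ge0\). Its trace is
\[
{\rm tr}(A_t)={\rm tr}(\Sigma_t^{-1}\widehat M_t)=\widehat\Psi_t^2 .
\]
Expanding the product gives
\[
\det(I_d+A_t)=\prod_j(1+\mu_j)\ge 1+\sum_j\mu_j=1+\widehat\Psi_t^2 ,
\]
because every cross term in the expansion is nonnegative. This is the only place where the batch structure matters. The standard rank-one identity \(\det(I+xx^\top)=1+\|x\|^2\) is replaced by this inequality, so we never need the per-sample bound \(\|w\|_{\Sigma_t^{-1}}^2\le1\). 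Adaptivity is harmless, since the argument is purely deterministic on each realized sequence.

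\textbf{Step 2 (telescoping).} Taking logarithms and summing over \(t=1,\ldots,T\) yields
\[
\sum_{t=1}^T\log(1+\widehat\Psi_t^2)\le\log\frac{\det(\Sigma_{T+1})}{\det(\lambda I_d)} .
\]

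\textbf{Step 3 (determinant upper bound).} By AM--GM on the eigenvalues of \(\Sigma_{T+1}\),
\[
\det(\Sigma_{T+1})\le\bigl({\rm tr}(\Sigma_{T+1})/d\bigr)^d .
\]
The trace satisfies
\[
{\rm tr}(\Sigma_{T+1})=\lambda d+\sum_{t\le T}{\rm tr}(\widehat M_t)\le\lambda d+T ,
\]
since \({\rm tr}(\widehat M_t)=m^{-1}\sum_i\|w_{t,i}\|_2^2\le1\) under the assumption \(\|w_{t,i}\|_2\le1\). Dividing by \(\det(\lambda I_d)=\lambda^d\) gives
\[
\frac{\det(\Sigma_{T+1})}{\det(\lambda I_d)}\le\Bigl(1+\frac{T}{\lambda d}\Bigr)^d ,
\]
which is exactly the claimed bound after taking logarithms.

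\textbf{Main obstacle.} The only nonroutine point is Step 1. The exposure \(\widehat\Psi_t^2\) is a trace and may exceed one, so the usual argument (which uses \(\min\{1,\cdot\}\) and \(x\le2\log(1+x)\)) does not apply directly. The inequality \(\det(I+A)\ge1+{\rm tr}(A)\) for positive semidefinite \(A\) handles this cleanly. It is also why the lemma is stated in terms of \(\log(1+\widehat\Psi_t^2)\) rather than \(\widehat\Psi_t^2\).
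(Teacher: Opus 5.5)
Your proposal is correct and follows the paper's proof essentially step for step. Both arguments set $A_t=\Sigma_t^{-1/2}\widehat M_t\Sigma_t^{-1/2}$, use $\det(I+A_t)\ge 1+{\rm tr}(A_t)=1+\widehat\Psi_t^2$ for positive semidefinite $A_t$, telescope $\log\det\Sigma_t$, and close with AM--GM using ${\rm tr}(\Sigma_{T+1})\le\lambda d+T$ and $\det(\Sigma_1)=\lambda^d$.
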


\begin{proof}
Let \(A_t=\Sigma_t^{-1/2}\widehat M_t\Sigma_t^{-1/2}\). Since \(A_t\succeq0\), \(\det(I+A_t)\ge1+{\rm tr}(A_t)=1+\widehat\Psi_t^2\). Hence \(\log\det(\Sigma_{T+1})-\log\det(\Sigma_1)=\sum_{t=1}^T\log\det(I+A_t)\ge\sum_{t=1}^T\log(1+\widehat\Psi_t^2)\). On the other hand, \({\rm tr}(\Sigma_{T+1})\le\lambda d+T\), so AM--GM gives \(\det(\Sigma_{T+1})\le(\lambda+T/d)^d\), while \(\det(\Sigma_1)=\lambda^d\). Combining the lower and upper determinant bounds proves the claim.
\end{proof}

\begin{proof}[Proof of Theorem~\ref{thm:G-zero converge}]
By Lemma~\ref{lem:coverage-transfer}, \(\widehat\Psi_t^2\ge \alpha_{\rm S}^2C_Q^{-1}\Psi_{Q,t}^2\) for every \(t\). Lemma~\ref{lem:potential} therefore implies
\[
\sum_{t=1}^T
\log\left(
1+\frac{\alpha_{\rm S}^2}{C_Q}\Psi_{Q,t}^2
\right)
\le
D_T.
\]
Thus there exists \(t_0\le T\) such that \(\log(1+\alpha_{\rm S}^2C_Q^{-1}\Psi_{Q,t_0}^2)\le D_T/T\). Equivalently, \(\Psi_{Q,t_0}\le \sqrt{C_Q}\alpha_{\rm S}^{-1}\sqrt{\exp(D_T/T)-1}\). Applying Lemma~\ref{lem:value} at \(t_0\) gives the first claim.

If \(D_T/T\le1\), then \(\exp(D_T/T)-1\le2D_T/T\), and we relax the constant \(\sqrt2\) to \(2\), obtaining the simplified bound. Finally, choosing \(T=\widetilde\Theta(C_Qd/\alpha_{\rm S}^2)\), \(m=\widetilde\Theta(d/(\kappa_{\rm BT}^2\varepsilon^2))\), and \(\sqrt{\lambda}B=\widetilde O(\varepsilon)\) gives \(r=\widetilde O(\varepsilon+\kappa_{\rm BT}^{-1}\sqrt{\eta_\delta})\), and the stated total retained sample complexity follows.
\end{proof}

\section{Limitation}
\label{app:limitation}
The total Tinker compute consumed across all reported runs (two models, two rounds each, the cutoff and Phase~1 ablations, the data-scaling sweep, and the temperature-0.7 AIME re-evaluation) is on the order of US\$2{,}000, so every cell in Table~\ref{tab:main} is a single end-to-end run; multi-seed reporting at three seeds would roughly triple this cost, and we therefore leave tighter per-cell error bars (relevant for AIME24/25, which have only $n=30$ unique problems and a $1\sigma \approx 8$\,pp at $p \approx 0.3$) as future work. Separately, we cap our reported results at $R = 2$: an exploratory R3 run on Llama-3.1-8B-Instruct collapses, with the Phase~2 quality filter rejecting all $1{,}994$ candidate pairs because the R2-trained Generator has converged to responses too short to satisfy the \texttt{chosen\_min\_chars} bound. We attribute this in part to a reward-hacking pathway: once the Generator has internalized the most accessible structural improvements, the Proposer continues maximizing Hint-$\delta$ via increasingly idiosyncratic hint patterns whose effect on $\log \pi_G(\ahard)$ is large but no longer corresponds to genuinely informative guidance. The per-token mean structure of length-normalized DPO (Eq.~\ref{eq:dpo_length_normalized}) further amplifies the resulting length collapse, since shortening the chosen response mechanically raises the average per-token log-ratio. Multi-round-stable variants  are an open direction we leave to future work.

\section*{Broader Impacts}
\label{sec:broader_impacts}

This research introduces \ours{}, a verifier-free framework for autonomous LLM self-evolution. The broader implications of this work are summarized below:

\textbf{Positive Societal Impacts:} 
(1) \textbf{Democratizing AI Alignment:} By replacing expensive human labeling and proprietary judge-based APIs with intrinsic predictive signals, our framework significantly lowers the financial and computational barriers to high-level model alignment. This enables the open-source community and academic institutions to develop advanced reasoning models without relying on centralized, closed-source infrastructure. 
(2) \textbf{Advancing Scalable Oversight:} As AI systems approach and eventually surpass human expertise in complex tasks, providing external supervision becomes increasingly difficult. G-Zero demonstrates a viable path toward ``scalable oversight,'' where models can autonomously identify and rectify their own logical blind spots without requiring human-curated ground truth.

\textbf{Potential Risks and Mitigations:} 
\label{app:risk}
(1) \textbf{Value Drift:} Since the self-evolution process is driven by internal distributional shifts rather than direct human feedback, the model may experience ``value drift,'' where it prioritizes complex structural depth at the expense of general helpfulness or safety alignment. To mitigate this, we recommend that models undergo a final safety validation phase or be constrained by lightweight human-in-the-loop checkpoints. 
(2) \textbf{Dual-Use Risks:} The ability to autonomously refine capabilities in open-ended domains could potentially be exploited by malicious actors to iteratively improve harmful outputs, such as sophisticated social engineering or cyberattack scripts. We emphasize that this framework should be applied to base models that already possess robust safety alignment, and researchers should implement output filtering to ensure the self-improvement remains within ethical boundaries.

\newpage
\section{Case Study}

\begin{figure*}[h]
    \centering
    \resizebox{\textwidth}{!}{%
    \begin{minipage}{1.13\textwidth}
    \begin{tcolorbox}[
        colback=gray!4!white, 
        colframe=black!60,    
        boxrule=0.8pt,        
        arc=4pt,              
        left=8pt, right=8pt, top=8pt, bottom=8pt, 
        fontupper=\small
    ]

    \textbf{Question:} Create a persuasive pitch email to a non-profit organization advocating for increased funding for public school art programs. Highlight the dual benefits of fostering creativity and improving academic performance. Audience: staunch supporter of education but business-focused, with limited time for reading. Aim for a conversational tone that's concise but impactful.\\[3pt]

    \textbf{Hint ($\delta = +0.056$):} Start with a compelling anecdote or statistic that grabs attention, frame the request as a strategic investment rather than a cost, and emphasize specific, measurable outcomes that align with the organization's priorities.\\[3pt]

    \textbf{$a_{\mathrm{hard}}$ \textit{(no-hint response)}:}\\
    Subject: Elevate Our Community's Future: A Call for Art in Education

    Hi [Prospective Donor's Name],

    I hope this message finds you well. I am reaching out on behalf of [Your Organization], a group dedicated to enriching our local educational landscape through innovative approaches.

    We believe that by investing in the arts within our public school system, we are not only nurturing the creative spirits of our youth but also enhancing their overall academic achievements. Recent studies have shown that students involved in the arts score higher on standardized tests, exhibit greater critical thinking skills, and show improved attendance rates. These statistics are compelling evidence that supports the integration of arts education into the core curriculum.

    Our proposal is simple yet profound: allocate a portion of our budget to bolster existing art programs or establish new ones where they are currently lacking. This investment will act as a catalyst for holistic development, ensuring that every child has access to an enriching educational experience that goes beyond traditional academics.

    We understand that your time is valuable, which is why we aim to make this conversation both impactful and efficient. If you're interested in learning more about how your support can transform lives and communities, please let me know. Together, we can create a brighter future for our children—one brushstroke at a time.

    Thank you for considering this vital initiative. I look forward to discussing this further.

    Best regards,

[Your Name]\\{}
[Your Position]\\{}
[Your Organization]\\{}
[Contact Information]\\[3pt]

    \textbf{$a_{\mathrm{assisted}}$ \textit{(hint-conditioned response)}:}\\
    Subject: Invest in Art. Transform Lives.

    Dear [Recipient's Name],

    Imagine a classroom where the walls are covered with student artwork, not just in the hallways, but in every corner—a place where creativity and imagination run wild. This isn't just a dream; it's a reality for students when their schools prioritize art programs. Did you know that students who participate in the arts are 4 times more likely to be recognized for academic achievement? (Source: Americans for the Arts)

    Now, think about it this way: investing in arts education isn't just about supporting the arts. It's about equipping students with critical thinking, problem-solving, and communication skills that are invaluable in any career. In fact, studies show that students with access to arts education perform better on standardized tests and are more likely to graduate high school.

    Here’s what we can achieve with your support:

    1. \textbf{Enhanced Academic Performance}: Students in art-rich environments show improved math and reading scores.
    2. \textbf{Increased Engagement}: Art classes keep students engaged, reducing behavioral issues and truancy.
    3. \textbf{Career Readiness}: Exposure to various art forms prepares students for a wide range of careers, from graphic design to engineering.

    By supporting our public school art programs, you’re not just investing in the arts—you’re investing in the future. This is a strategic move that aligns with your commitment to fostering talented, innovative, and well-rounded individuals.

    Let’s work together to make this vision a reality. Your contribution can have a profound impact on the lives of countless students. We look forward to discussing this further and exploring how we can collaborate to bring about meaningful change.

    Warm regards,

[Your Name]\\{}
[Your Position]\\{}
[Your Organization]\\{}
[Contact Information]\\[3pt]

    \vspace{1ex}
    \hrule
    \vspace{1ex}

    Feel free to customize this pitch to better fit your organization's voice and the specific needs of your audience.

    \end{tcolorbox}
    \end{minipage}
    }
    \vspace{-0.1in}
    \caption{An illustrative (q, h, $a_{\mathrm{hard}}$, $a_{\mathrm{assisted}}$) pair from G-Zero R1 on Qwen3-8B-Base. The hint specifies three structural improvements (anecdote/statistic, investment-not-cost framing, measurable outcomes); $a_{\mathrm{assisted}}$ applies all three, while $a_{\mathrm{hard}}$ defaults to a generic template.}
    \label{fig:example_pitch}
    \vspace{-0.1in}
\end{figure*}

\begin{figure*}
    \centering
    \begin{tcolorbox}[
        colback=gray!4!white, 
        colframe=black!60,    
        boxrule=0.8pt,        
        arc=4pt,              
        left=8pt, right=8pt, top=8pt, bottom=8pt, 
        fontupper=\small
    ]

    \textbf{Question:} Design a memorable slogan for a new line of eco-friendly diapers that appeals to environmentally conscious parents between the ages of 25 and 40, highlighting both sustainability and comfort.\\[3pt]

    \textbf{Hint ($\delta = +0.031$):} Focus on creating a slogan that balances the environmental benefits of the product with its user-friendly convenience, using language that evokes positive emotions and a connection to nature without being overly technical or abstract.\\[3pt]

    \textbf{$a_{\mathrm{hard}}$ \textit{(no-hint response)}:}\\
    Certainly! Here are a few slogan ideas for an eco-friendly diaper line targeting environmentally conscious parents aged 25-40:

    "Soft Comfort, Pure Earth."\\
    "Nature's Touch, Gentle on Bums."\\
    "Sustainable Softness for Little Ones."\\
    "Eco-Friendly, Eco-Friendly, and Still Gentle."\\
    "Green Comfort, Happy Baby, Happy Planet."\\
    "Sustainable Solutions for Sensible Parents."\\
    "Earth-Friendly, Baby-Friendly, Always Gentle."\\
    "Pure Comfort, Pure Planet."\\
    "Nature's Embrace, Your Baby's Best Choice."\\
    "Eco-Conscious, Comfort-Aware, Baby's Delight."\\[3pt]

    \textbf{$a_{\mathrm{assisted}}$ \textit{(hint-conditioned response)}:}\\
    Here's a potential slogan:

    "Soft on Skin, Kind to Earth - EcoGentle Diapers"

    This slogan aims to balance the environmental benefits of the product ("Kind to Earth") with its user-friendly convenience ("Soft on Skin"). The phrase "EcoGentle" creates a positive emotional association with nature while also conveying the gentle, comfortable nature of the diapers. It's simple, memorable, and conveys a clear message that resonates with environmentally conscious parents.

    \end{tcolorbox}
    \vspace{-0.1in}
    \caption{Second example. The hint asks for a slogan balancing sustainability with comfort using emotionally resonant language. $a_{\mathrm{hard}}$ throws ten generic options without commentary; $a_{\mathrm{assisted}}$ commits to a single slogan with a brand name (\emph{EcoGentle}) and explains why it satisfies the brief.}
    \label{fig:example_slogan}
    \vspace{-0.1in}
\end{figure*}

\end{document}